\newcommand{\indep}{{\perp\!\!\!\perp}}
\newcommand{\nindep}{\not\!\perp\!\!\!\perp}
\newcommand{\supplmat}{Supplementary material}
\newcommand{\gcite}{\citep}
\newcommand{\gcitet}{\citet}
\newcommand{\gcitep}{\citealp}
\newtheorem{theorem}{Theorem}
\newtheorem{corollary}{Corollary}
\newtheorem{proposition}{Proposition}
\newtheorem{definition}{Definition}
\newenvironment{proof}{\noindent \emph{Proof.}}{\hspace{\stretch{1}}$\square$ \\ \ }
\title{Inferring the finest pattern of mutual independence from data}
\author{Guillaume Marrelec\textsuperscript{1,*} and Alain Giron\textsuperscript{1,*}\\
\small \textsuperscript{1} Laboratoire d'imagerie biom{\'e}dicale, LIB, Sorbonne Universit{\'e}, CNRS, INSERM, F-75006, Paris, France\\
\small \textsuperscript{*} Email: firstname.lastname@inserm.fr}
\date{}
\begin{document}

\maketitle

\normalsize

\begin{abstract}
  For a random variable $X$, we are interested in the blind extraction of its finest mutual independence pattern $\mu ( X )$. We introduce a specific kind of independence that we call dichotomic. If $\Delta ( X )$ stands for the set of all patterns of dichotomic independence that hold for $X$, we show that $\mu ( X )$ can be obtained as the intersection of all elements of $\Delta ( X )$. We then propose a method to estimate $\Delta ( X )$ when the data are independent and identically (i.i.d.) realizations of a multivariate normal distribution. If $\hat{\Delta} ( X )$ is the estimated set of valid patterns of dichotomic independence, we estimate $\mu ( X )$ as the intersection of all patterns of $\hat{\Delta} ( X )$. The method is tested on simulated data, showing its advantages and limits. We also consider an application to a toy example as well as to experimental data.
  \par
  \
  \par
  \noindent \textit{Keywords:} mutual independence, dichotomic independence, lattice, finest pattern, inference
\end{abstract}

\section{Introduction}

In probability theory, $n$ random variables $ X_1$, \dots, $X_n$ are said to be mutually independent if their joint distribution can be expressed as the product of their marginal distributions
\gcite[Section~2.6]{Hogg-2004}:
\begin{equation} \label{eq:def:mi}
 \Pr (X_1, \dots, X_n ) = \prod_{ i = 1 } ^ n \Pr ( X_i ).
\end{equation}
Analysis of mutual independence is a key issue in statistics. Several subtopics relevant to this problem have been examined in depth. Some authors have provided efficient measures to test for the independence between two variables in various conditions, the two variables being either unidimensional (\gcitep{Spearman-1904}; \gcitep{Hotelling-1936}; \gcitep{Kendall-1938}; \gcitep{Gebelein-1941}; \gcitep{Hoeffding-1948}; \gcitep{Renyi-1959b}; \gcitep{Reshef-2011}) or multidimensional (\gcitep{Jupp-1980}; \gcitep[Chaps. 2 and 8]{Cover_TM-1991}; \gcitep{Bakirov-2006}; \gcitep{Schott-2008}; \gcitep{Jiang_D-2012}; \gcitep{Szekely-2013b}). Testing for the existence of a specific pattern of mutual independence has also been a topic of interest (\gcitep[Chap. 9]{Anderson_TW-1958}; \gcitep[Chap.~8, Section 2 and 3.1, and pp. 306--307]{Kullback-1968}; \gcitep[Section~23.8]{Zar-2010}). Some researchers have focused on total, or complete, independence, i.e., mutual independence between all variables \gcite{Csorgo-1985, Schott-2005, Pfister-2018}. Finally, there has also been some interest for the investigation of mutual independence through multiple bivariate pairwise independence tests \gcite{Mao-2017, Mao-2018}.
\par
We are here interested in yet another subtopic of mutual independence analysis, namely, blind extraction of mutual independence patterns. Indeed, from the perspective of $X = \{ X_1, \dots, X_n \}$, Equation~\eqref{eq:def:mi} corresponds to a particular case---that of total independence. In the more general case, other situations may occur, where different groups of variables would be mutually independent. Investigating patterns of mutual independence on $X$ therefore requires to propose methods that extract these groups. The objective is then to explore the whole set of mutual independence patterns that could exist within a multidimensional variable and determine the ones that best explain the data. To the best of the authors' knowledge, this topic has generated few publications, with the exception of \gcitet{Marrelec-2021b}, who proposed a Bayesian approach coupled with an MCMC exploration of the pattern space.
\par
In blind extraction, a major issue that still needs to be tackled is that a collection $X$ of variables can (and, usually, does) have several patterns of mutual independence. Denoting by $\Pi ( X )$ the set of all patterns of mutual independence on $X$, we can define the \emph{finest} pattern of mutual independence $\mu ( X )$ as the intersection of all patterns of $\Pi ( X )$. In the present manuscript, we are interested in providing a one-step data-driven procedure that infers $\mu ( X )$. To this end, we introduce a particular kind of independence which we call dichotomic. A pattern of dichotomic independence on $X$ deals with the independence between a subvariable of $X$ and its complement in $X$. We denote by $\Delta ( X )$ the set of all patterns of dichotomic independence on $X$, which is a subset of $\Pi ( X )$. Our main result is that $\mu ( X )$ can be exactly reconstructed as the intersection of all elements of $\Delta ( X )$ (see Theorem~\ref{th:Delta} below). Note that, while intuitively clear, the concepts introduced above---$\Pi ( X )$, $\mu ( X )$, $\Delta ( X )$ and the notion of intersection---will be specified below using the bijection between patterns of mutual independence and partitions, together with the lattice structure of partitions. We then propose a statistical procedure that estimates $\Delta ( X )$ in the case where $X$ follows a multivariate normal distribution and the data is composed of independent and identically (i.i.d.) realizations of $X$. The approach relies on testing the minimum discrimination information statistics \gcite[Chap.~12, Section~3.6]{Kullback-1968} corresponding to all patterns of dichotomic independence and correcting for multiple comparison by controlling the false discovery rate \gcite{Benjamini-1995}. Patterns that cannot be rejected are said to belong to the estimate $\hat{\Delta} ( X )$ of $\Delta ( X )$. We finally estimate $\mu ( X )$ by $\hat{\mu} ( X )$, the intersection of all elements of $\hat{\Delta} ( X )$.
\par
The outline of the manuscript is the following. In Section~\ref{s:mipl}, we relate mutual independence and partitions, introduce the lattice structure over the set of partitions, and investigate its implications for mutual independence. In Section~\ref{s:dpi}, we define dichotomic independence, prove that the finest pattern of mutual independence can be uniquely extracted from dichotomic independence, and provide for a statistical procedure that extracts the set of patterns of dichotomic independence when the data are independent and identically distributed (i.i.d.) as a multivariate normal distribution. In Section~\ref{s:ss}, we perform a simulation study to assess the quality of the inference process as well as its strengths and weaknesses. In Section~\ref{s:hiv}, we provide an application to a toy example and, in Section~\ref{s:rd}, to real data consisting of brain recordings. Further issues are discussed in Section~\ref{s:disc}.

\section{Mutual independence, partitions and lattices} \label{s:mipl}

After a quick introduction of the notations (Section~\ref{ss:not}), we emphasize the key connection between mutual independence and partitions (Section~\ref{ss:mip}). In Sections \ref{ss:amop} and \ref{ss:tsomip}, we then focus on $\Pi ( X )$, the set of all patterns of mutual independence that hold on $X$, and $\mu ( X )$, the finest pattern of $\Pi ( X )$, providing a characterization of both concepts in terms of lattice structure.

\subsection{Notations} \label{ss:not}

We henceforth rely on the following notations. Let $X = \{ X_1, \dots, X_n \}$ be a collection of random variables and $N = [n] = \{ 1, \dots, n \}$ the index set containing the full set of suffices.  If $a = \{ i_1, \dots, i_k \}$ is a subset of $N$, then the random variable $X_a$ is defined as the subset of variables of $X$ such that
$$X_a = \{ X_{i_1}, \dots, X_{i_k} \} = \{ X_i: i \in a \}.$$
The full variable is $X_N = X$, $X_{\emptyset}$ is empty, and $X_{N \setminus a}$ denotes the subvariable of $X$ obtained by excluding $X_a$. $X_a$ can be thought of as the restriction of $X$ (treated as a mapping from $N$) to $a$. Such notations are similar to ones already existing (\gcitep{Darroch-1980}; \gcitep[Section~1.4]{Whittaker-1990}), but modified to assure consistency. 

\subsection{Mutual independence and partitions} \label{ss:mip}

There exists a bijection between patterns of mutual independence and partitions. Indeed, if $X$ is such that
\begin{equation} \label{eq:def:mi:gen}
 \Pr ( X ) = \prod_{ i = 1 } ^ k \Pr ( X_{ a_i } ),
\end{equation}
then the $a_i$'s form a partition of $N = \{ 1, \dots, n \}$, which we also express as $a_1 \mid \dots \mid a_k$. The $a_i$'s are called the blocks of the partition. The definition of a decomposition of $X$ into mutually independent subsets of variables is therefore equivalent to the choice of a partition. We let $\Omega ( N )$ be the set of all partitions of $N$. The cardinality of this set, i.e., the number of partitions of $N$, is given by the $n$th Bell number, traditionally denoted $\varpi_n$ \gcite{Rota-1964}---see Table~\ref{tab:bell-stirling} below for a few examples. The number of partitions of $N$ with exactly $k$ blocks is given by the Stirling number of the second kind ${ n \brace k }$. In the following, we will mostly focus on a  partition representation of patterns of mutual independence. In particular, we will take advantage of the bijection between patterns of mutual independence and partitions to identify both. For instance, we will say that the pattern of mutual independence is $a_1 \mid \dots \mid a_k$ as a shortcut for the fact that the pattern of mutual independence of interest is associated with the partition $a_1 \mid \dots \mid a_k$, and is therefore that $X_{ a_1 }, \dots, X_{ a_k }$ are mutually independent.

\subsection{A multiplicity of patterns} \label{ss:amop}

Until now, we have only assumed the existence of one pattern of mutual independence between subvariables of $X$. Note however that the existence of one pattern usually entails the existence of coarser patterns. Consider for instance a variable $X = X_{[6]}$ such that $X_1$, $X_{ \{ 2, 3 \} }$, $X_{ \{ 4, 5 \} }$ and $X_6$ are mutually independent, i.e., the pattern of mutual independence is associated with the partition $\pi = 1 \mid 2 3 \mid 4 5 \mid 6$. Such a pattern of mutual independence entails other coarser patterns, such as the ones associated with the partitions $1 2 3 \mid 4 5 \mid 6$ and $1 \mid 2 3 4 5 \mid 6$. Our starting point is that, given a pattern of mutual independence, it is possible to characterize other (coarser) patterns.

\begin{proposition} \label{prop:im}
 Assume that $a_1 \mid \dots \mid a_k $ is a partition of $N$ such that $X_{ a_1 }$, \dots, $X_{ a_ k }$ are mutually independent. Let $b_1, \dots, b_l$ be disjoint subsets of $N$ such that no two $b_i$'s intersect the same $a_j$ (if $b_i \cap a_j \neq \emptyset$, then $b_{i'} \cap a_j = \emptyset$ for all $i' \neq i$). Then $X_{ b_1 }$, \dots, $X_{ b_l }$ are mutually independent.
\end{proposition}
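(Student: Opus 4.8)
The plan is to exploit the product structure of the joint distribution directly, grouping the independent factors $\Pr ( X_{a_j} )$ so that each $X_{b_i}$ is recovered as a marginal of one, and only one, group. To this end, for each $i \in \{ 1, \dots, l \}$ I would collect all blocks meeting $b_i$ and set $A_i = \bigcup \{ a_j : a_j \cap b_i \neq \emptyset \}$. Two observations are then immediate. First, $b_i \subseteq A_i$: every index of $b_i$ lies in some block $a_j$, and that block intersects $b_i$, hence is included in $A_i$. Second, the sets $A_1, \dots, A_l$ are pairwise disjoint: if some $a_j$ belonged to both $A_i$ and $A_{i'}$ with $i \neq i'$, then $a_j$ would meet both $b_i$ and $b_{i'}$, contradicting the hypothesis that no two $b$'s intersect the same block.

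Next I would upgrade the hypothesized mutual independence of $X_{a_1}, \dots, X_{a_k}$ to mutual independence of the grouped variables. Starting from $\Pr ( X_N ) = \prod_{j=1}^k \Pr ( X_{a_j} )$, I would partition the block indices $\{ 1, \dots, k \}$ into the groups defining $A_1, \dots, A_l$, together with a (possibly empty) leftover group gathering the blocks that meet no $b_i$ and whose union I call $A_0$. Re-associating the factors of the product according to this grouping yields $\Pr ( X_N ) = \prod_{i=0}^l \Pr ( X_{A_i} )$, where within each group the product of the block marginals collapses to the marginal of the union precisely because those blocks are themselves independent. Hence $X_{A_0}, X_{A_1}, \dots, X_{A_l}$ are mutually independent, and a fortiori so is the subfamily $X_{A_1}, \dots, X_{A_l}$ obtained by marginalizing out $X_{A_0}$.

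Finally, since the $A_i$ are disjoint and $b_i \subseteq A_i$, each $X_{b_i}$ is a marginal (a coordinate projection) of the corresponding $X_{A_i}$. I would then invoke the elementary fact that functions of mutually independent random vectors are themselves mutually independent: integrating out, in the factorized joint law of $X_{A_1}, \dots, X_{A_l}$, the coordinates indexed by each $A_i \setminus b_i$ gives $\Pr ( X_{b_1}, \dots, X_{b_l} ) = \prod_{i=1}^l \Pr ( X_{b_i} )$, which is exactly the claimed mutual independence of $X_{b_1}, \dots, X_{b_l}$.

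The computations here are routine; the only step requiring care is the bookkeeping that makes the grouping $A_1, \dots, A_l$ well defined and disjoint, and this is precisely where the hypothesis that no two $b_i$'s intersect the same $a_j$ enters. I therefore expect this combinatorial set-up to be the main obstacle, since everything afterwards is a mechanical regrouping and marginalization of a product measure.
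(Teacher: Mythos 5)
Your proof is correct, and it is organized differently from the paper's. The paper works by \emph{refining} each block: for each $a_j$ it isolates $a_j' = a_j \cap b_{i(j)}$ (with $i(j)$ the unique index, if any, such that $b_{i(j)}$ meets $a_j$) and $a_j'' = a_j \setminus b_{i(j)}$, writes $\Pr ( X ) = \prod_j \Pr ( X_{a_j'}, X_{a_j''} )$, and marginalizes out all the $a_j''$ simultaneously; the sum--product exchange then gives $\Pr ( X_{b_1}, \dots, X_{b_l} ) = \prod_j \Pr ( X_{a_j'} )$, which regroups into $\prod_i \Pr ( X_{b_i} )$ since each $b_i$ is a union of pieces $a_j'$. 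You instead \emph{coarsen}: you merge whole blocks into super-blocks $A_i \supseteq b_i$ (plus a leftover $A_0$), invoke the fact that disjoint unions of mutually independent variables remain mutually independent, and then project each $A_i$ down to $b_i$ by a second marginalization. Both arguments use the hypothesis at exactly the same point --- each $a_j$ meets at most one $b_i$, which is what makes the assignment of blocks (for you) or of pieces $a_j'$ (for the paper) unambiguous and the resulting sets disjoint. What your route buys is modularity: it factors the proposition into two standard, reusable lemmas (grouping preserves mutual independence; marginals of independent vectors are independent), and in effect it first establishes the special case where each $b_i$ is a union of blocks and then reduces the general case to it. What the paper's route buys is economy: a single self-contained computation, with no intermediate objects beyond the bipartition of each block, so nothing needs to be quoted or proved separately.
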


See Appendix~\ref{an:pr:prop:im} for a proof. We therefore need to consider the set of all patterns of mutual independence that are valid for a given random variable.

\begin{definition}
 For a given random variable, $\Pi ( X )$ is the subset of partitions corresponding to all existing patterns of mutual independence that hold on $X$.
\end{definition}

$\Pi ( X )$ is not empty, as even for a variable with no mutual independence, the 1-block partition $1 \dots n$ belongs to $\Pi ( X )$. Two examples of $\Pi ( X )$ are given in Figure~\ref{fig:Pi:ex}.
  
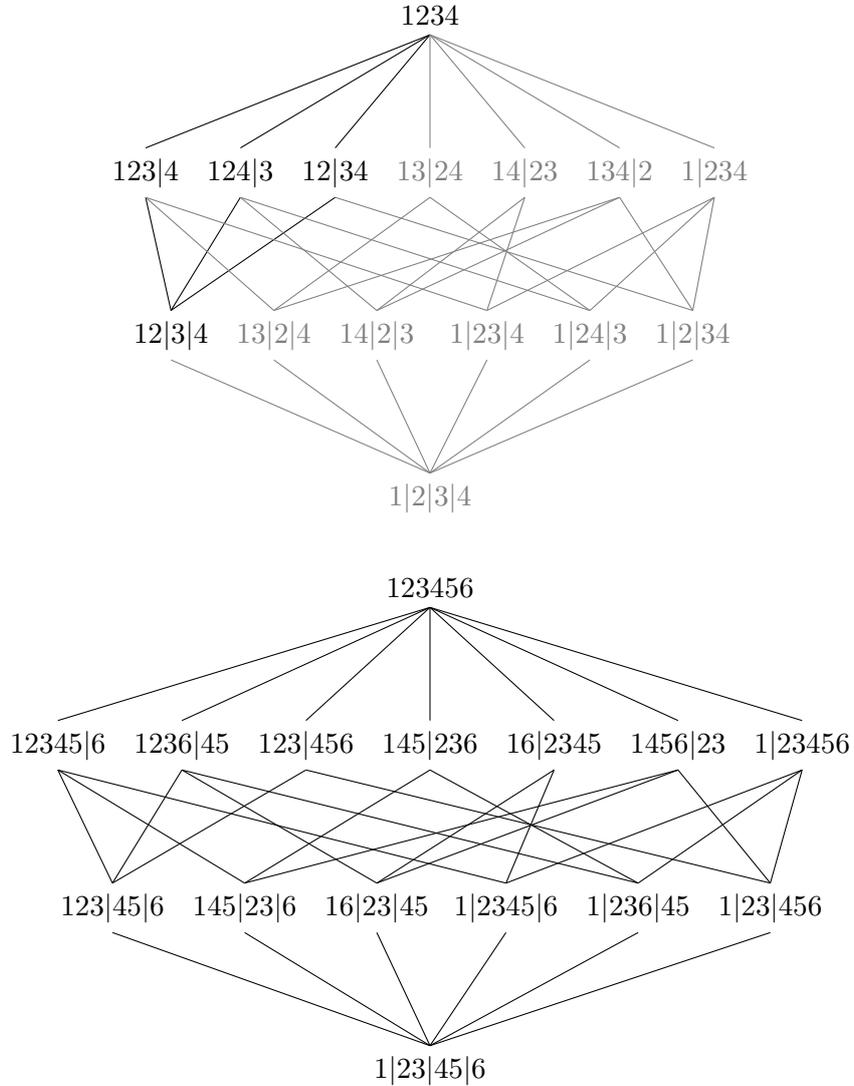
\begin{figure}[!htbp]
 \centering
 \begin{tabular}{c}
  \begin{tabular}{c}
   \begin{tikzpicture}
    \node(N1) [color=gray] {$1|2|3|4$};
    \node(N23) [above=1.5cm of N1, xshift=-0.7cm, color=gray] {$14|2|3$};
    \node(N22) [left=0.1cm of N23, color=gray] {$13|2|4$};
    \node(N21) [left=0.1cm of N22] {$12|3|4$};
    \node(N24) [above=1.5cm of N1, xshift=0.75cm, color=gray] {$1|23|4$};
    \node(N25) [right=0.1cm of N24, color=gray] {$1|24|3$};
    \node(N26) [right=0.1cm of N25, color=gray] {$1|2|34$};
    \node(N34) [above=1.5cm of N23, xshift=0.7cm, color=gray] {$13|24$};
    \node(N33) [left=0.1cm of N34] {$12|34$};
    \node(N32) [left=0.1cm of N33] {$124|3$};
    \node(N31) [left=0.1cm of N32] {$123|4$};
    \node(N35) [right=0.1cm of N34, color=gray] {$14|23$};
    \node(N36) [right=0.1cm of N35, color=gray] {$134|2$};
    \node(N37) [right=0.1cm of N36, color=gray] {$1|234$};
    \node(N4) [above=1.5cm of N34] {$1234$};
    \draw[gray](N1.north)--(N21.south);
    \draw[gray](N1.north)--(N22.south);
    \draw[gray](N1.north)--(N23.south);
    \draw[gray](N1.north)--(N24.south);
    \draw[gray](N1.north)--(N25.south);
    \draw[gray](N1.north)--(N26.south);
    \draw(N21.north)--(N31.south);
    \draw(N21.north)--(N33.south);
    \draw(N21.north)--(N32.south);
    \draw[gray](N22.north)--(N31.south);
    \draw[gray](N22.north)--(N34.south);
    \draw[gray](N22.north)--(N36.south);
    \draw[gray](N23.north)--(N32.south);
    \draw[gray](N23.north)--(N35.south);
    \draw[gray](N23.north)--(N36.south);
    \draw[gray](N24.north)--(N31.south);
    \draw[gray](N24.north)--(N35.south);
    \draw[gray](N24.north)--(N37.south);
    \draw[gray](N25.north)--(N32.south);
    \draw[gray](N25.north)--(N34.south);
    \draw[gray](N25.north)--(N37.south);
    \draw[gray](N26.north)--(N33.south);
    \draw[gray](N26.north)--(N36.south);
    \draw[gray](N26.north)--(N37.south);
    \draw(N31.north)--(N4.south);
    \draw(N32.north)--(N4.south);
    \draw(N33.north)--(N4.south);
    \draw[gray](N34.north)--(N4.south);
    \draw[gray](N35.north)--(N4.south);
    \draw[gray](N36.north)--(N4.south);
    \draw[gray](N37.north)--(N4.south);
   \end{tikzpicture}
  \end{tabular} \\
  \ \\
  \begin{tabular}{c}
   \begin{tikzpicture}
    \node(N1) {$1|23|45|6$};
    \node(N23) [above=1.5cm of N1, xshift=-0.7cm] {$16|23|45$};
    \node(N22) [left=0.1cm of N23] {$145|23|6$};
    \node(N21) [left=0.1cm of N22] {$123|45|6$};
    \node(N24) [above=1.5cm of N1, xshift=1cm] {$1|2345|6$};
    \node(N25) [right=0.1cm of N24] {$1|236|45$};
    \node(N26) [right=0.1cm of N25] {$1|23|456$};
    \node(N34) [above=1.5cm of N23, xshift=0.7cm] {$145|236$};
    \node(N33) [left=0.1cm of N34] {$123|456$};
    \node(N32) [left=0.1cm of N33] {$1236|45$};
    \node(N31) [left=0.1cm of N32] {$12345|6$};
    \node(N35) [right=0.1cm of N34] {$16|2345$};
    \node(N36) [right=0.1cm of N35] {$1456|23$};
    \node(N37) [right=0.1cm of N36] {$1|23456$};
    \node(N4) [above=1.5cm of N34] {$123456$};
    \draw(N1.north)--(N21.south);
    \draw(N1.north)--(N22.south);
    \draw(N1.north)--(N23.south);
    \draw(N1.north)--(N24.south);
    \draw(N1.north)--(N25.south);
    \draw(N1.north)--(N26.south);
    \draw(N21.north)--(N31.south);
    \draw(N21.north)--(N33.south);
    \draw(N21.north)--(N32.south);
    \draw(N22.north)--(N31.south);
    \draw(N22.north)--(N34.south);
    \draw(N22.north)--(N36.south);
    \draw(N23.north)--(N32.south);
    \draw(N23.north)--(N35.south);
    \draw(N23.north)--(N36.south);
    \draw(N24.north)--(N31.south);
    \draw(N24.north)--(N35.south);
    \draw(N24.north)--(N37.south);
    \draw(N25.north)--(N32.south);
    \draw(N25.north)--(N34.south);
    \draw(N25.north)--(N37.south);
    \draw(N26.north)--(N33.south);
    \draw(N26.north)--(N36.south);
    \draw(N26.north)--(N37.south);
    \draw(N31.north)--(N4.south);
    \draw(N32.north)--(N4.south);
    \draw(N33.north)--(N4.south);
    \draw(N34.north)--(N4.south);
    \draw(N35.north)--(N4.south);
    \draw(N36.north)--(N4.south);
    \draw(N37.north)--(N4.south);
   \end{tikzpicture}
  \end{tabular}
 \end{tabular}
 \caption{\textbf{Two examples of $\Pi ( X )$.} Top: representation of $\Pi ( X )$ corresponding to $X = \{ X_1, X_2, X_3, X_4 \}$ such that $X_{ \{ 1, 2 \} }$, $X_3$ and $X_4$ are mutually independent, i.e., $\mu ( X ) = 1 2 \mid 3 \mid 4$. $\Pi ( X )$ is superimposed on $\Omega ( [4] )$ (in gray). Bottom: representation of $\Pi ( X )$ corresponding to $X = \{ X_1, X_2, X_3, X_4, X_5, X_6 \}$ such that $X_1$, $X_{ \{ 2, 3 \} }$, $X_{ \{4, 5 \} }$ and $X_6$ are mutually independent, i.e.,  $\mu ( X ) = 1 \mid 2 3 \mid 4 5 \mid 6$. Only $\Pi ( X )$ is represented.} \label{fig:Pi:ex}
\end{figure}


Importantly, the existence of a pattern of mutual independence does not prevent the existence of finer patterns either. Going back to our example above, stating that $X_1$, $X_{ \{ 2, 3 \} }$, $X_{ \{ 4, 5 \} }$ and $X_6$ are mutually independent (corresponding to partition $\pi = 1 \mid 23 \mid 45 \mid 6$) is not incompatible with the fact that $X_1$, $X_{ \{ 2, 3 \} }$, $X_4$, $X_5$ and $X_6$ could also be mutually independent (corresponding to partition $\pi' = 1 \mid 2 3 \mid 4 \mid 5 \mid 6$). For this reason,  we also need to define the notion of finest pattern of mutual independence.

\begin{definition}
 Let $X$ be a random variable. $\mu ( X )$ is the partition that can be associated with the finest pattern of mutual independence for $X$.
\end{definition}

The existence and unicity of this finest pattern are proved in the next section.

\subsection{The sublattice of mutual independence patterns} \label{ss:tsomip}

At this point, it is important to note that $\Omega ( N )$ has the key property of being a lattice (\gcitep{Birkhoff-1935b}; \gcitep[Example~9, pp.~15--16]{Birkhoff-1973}; \gcitep[Chap.~I, Section~2.B]{Aigner-1979}; for a quick review, see Section~1 of \supplmat). As a consequence, it is associated with a partial order, denoted ``$\leqslant$''.  We say that, for two partitions $\pi_1$ and $\pi_2$, $\pi_1 \leqslant \pi_2$ if $\pi_1$  is finer than $\pi_2$, i.e., each block of $\pi_1$ is contained in a block of $\pi_2$. For example, if $\pi$ is defined as in the previous section, $\pi = 1 \mid 2 3 \mid 4 5 \mid 6$, we have $\pi \leqslant 1 2 3 \mid 4 5 \mid 6$, $\pi \leqslant 1 \mid 2 3 4 5 \mid 6$ and $1 \mid 2 3 \mid 4 \mid 5 \mid 6 \leqslant \pi$. There is actually a particular relation between the partial order and patterns of mutual independence, which is a direct consequence of Proposition~\ref{prop:im}:

\begin{corollary} \label{prop:im:pfq}
 If $\pi$ is a partition associated with a pattern of mutual independence on $X$, then any partition $\pi'$ such that $\pi \leqslant \pi'$ is also associated with a pattern of mutual independence on $X$.
\end{corollary}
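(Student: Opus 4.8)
The plan is to recognize that the corollary is a direct translation of the lattice relation ``$\pi$ finer than $\pi'$'' into the combinatorial hypothesis of Proposition~\ref{prop:im}, followed by a single application of that proposition. First I would name the partitions involved: write $\pi = a_1 \mid \dots \mid a_k$, so that by assumption $X_{a_1}, \dots, X_{a_k}$ are mutually independent, and write the coarser partition as $\pi' = b_1 \mid \dots \mid b_l$. Being the blocks of a partition of $N$, the $b_i$'s are automatically pairwise disjoint, so the only thing remaining to verify is the non-crossing condition demanded by Proposition~\ref{prop:im}.

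Next I would unpack the definition of the partial order. Since $\pi \leqslant \pi'$, each block $a_j$ of $\pi$ is contained in a single block of $\pi'$; equivalently, for every $j$ there is exactly one index $i$ with $a_j \subseteq b_i$, whence $a_j \cap b_i = a_j$ and $a_j \cap b_{i'} = \emptyset$ for all $i' \neq i$. This is precisely the statement that no two of the $b_i$'s intersect the same $a_j$, i.e. the hypothesis of Proposition~\ref{prop:im}. Applying that proposition with the $a_j$'s playing the role of the mutually independent blocks and the $b_i$'s playing the role of the new family, I would conclude that $X_{b_1}, \dots, X_{b_l}$ are mutually independent, which is exactly the assertion that $\pi'$ is associated with a pattern of mutual independence on $X$.

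I do not expect any genuine obstacle here, since all the substance already resides in Proposition~\ref{prop:im}; the corollary merely reinterprets the order relation on $\Omega(N)$ in set-theoretic terms. The only point requiring care is the bookkeeping about direction: one must make sure the finer partition $\pi$ supplies the mutually independent blocks $a_j$ while the coarser partition $\pi'$ supplies the grouped blocks $b_i$, and that the defining property ``each block of $\pi$ is contained in a block of $\pi'$'' is read off correctly from the definition of $\leqslant$ rather than inverted.
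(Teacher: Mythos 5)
Your proof is correct and matches the paper's intended argument: the paper states the corollary as a direct consequence of Proposition~\ref{prop:im}, and your write-up simply makes that explicit by checking that the blocks of the coarser partition $\pi'$ are disjoint and, via $\pi \leqslant \pi'$, satisfy the non-crossing hypothesis of the proposition. Nothing is missing; the bookkeeping on the direction of $\leqslant$ is handled correctly.
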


Since $\Omega ( N )$ is a lattice, the partial order $\leqslant$ can be used to define for every pair of elements their unique least upper bound, or join (``union of two partitions'', denoted $\vee$), and their unique greatest lower bound, or meet (``intersection of two partitions'', denoted $\wedge$). While the terms ``join'' and ``meet'' are more adapted to the situation, we will rather use the terms ``union'' (instead of ``join'') and ``intersection'' (instead of ``meet''), which are more pictural and take advantage of the analogy with sets. Similarly, while the results will be stated and proved in lattice terminology in the appendix, we will try in the main text to restrict
ourselves to results that can be stated in general terms.
\par
Interestingly, we can show that the set of patterns of mutual independence is stable by union and intersection (i.e., the union and intersection of any pair of patterns of mutual independence on $X$ are patterns of mutual independence on $X$ as well) and that the finest pattern has a simple characterization (see Appendix~\ref{an:th:Pi} for a proof)
  
\begin{theorem} \label{th:Pi}
 Let $\Pi ( X )$ be the subset of partitions corresponding to all patterns of mutual independence that hold on $X$. Then $\Pi ( X )$ is a sublattice of $\Omega ( N )$. Both its coarsest and its finest elements exist and are unique: Its coarsest element is the trivial one-block partition $1 \dots n$, while its finest element is equal to
 \begin{equation} \label{eq:def:mu}
  \mu ( X ) = \wedge_{ \pi \in \Pi ( X ) } \pi.
 \end{equation}
\end{theorem}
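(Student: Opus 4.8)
The plan is to split the statement into a bounds part and a sublattice part, handling the two lattice operations separately and leaving the finest element to the very end. The only genuinely non-routine ingredient turns out to be closure under intersection; everything else reduces to Corollary~\ref{prop:im:pfq} and standard finite-lattice bookkeeping.

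First I would dispose of the coarsest element. The one-block partition $1 \dots n$ is the top of $\Omega ( N )$, and the associated factorization $\Pr ( X ) = \Pr ( X_N )$ holds trivially, so $1 \dots n \in \Pi ( X )$; since every partition $\pi$ satisfies $\pi \leqslant 1 \dots n$, this element is automatically the unique coarsest element of $\Pi ( X )$. Next I would prove stability under the two operations. Closure under union is immediate from Corollary~\ref{prop:im:pfq}: for $\pi_1, \pi_2 \in \Pi ( X )$ we have $\pi_1 \leqslant \pi_1 \vee \pi_2$, hence $\pi_1 \vee \pi_2 \in \Pi ( X )$.

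The substantive step, and the one I expect to be the main obstacle, is closure under intersection, because intersection produces a \emph{refinement} whereas Proposition~\ref{prop:im} and its corollary only deliver coarsenings. The key observation is that Proposition~\ref{prop:im} can nonetheless be applied block by block. Write the blocks of $\pi_1$ as $A_1, \dots, A_p$ and those of $\pi_2$ as $B_1, \dots, B_q$, so that the blocks of $\pi_1 \wedge \pi_2$ are the nonempty intersections $A_i \cap B_j$. Fixing a block $A_i$ of $\pi_1$, I would apply Proposition~\ref{prop:im} to the partition $\pi_2$ with the choice $b_j = A_i \cap B_j$: these subsets are disjoint, and each $A_i \cap B_j$ meets only the single block $B_j$, so the hypothesis that no two $b$'s intersect the same block is satisfied. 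The proposition then yields that the $X_{ A_i \cap B_j }$ are mutually independent, i.e. $\Pr ( X_{ A_i } ) = \prod_j \Pr ( X_{ A_i \cap B_j } )$, since the $A_i \cap B_j$ partition $A_i$. Substituting this refinement of each factor into the $\pi_1$-factorization $\Pr ( X ) = \prod_i \Pr ( X_{ A_i } )$ gives $\Pr ( X ) = \prod_{ i, j } \Pr ( X_{ A_i \cap B_j } )$, which is exactly the factorization associated with $\pi_1 \wedge \pi_2$; hence $\pi_1 \wedge \pi_2 \in \Pi ( X )$. Together with union-closure this shows that $\Pi ( X )$ is a sublattice of $\Omega ( N )$.

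Finally I would extract the finest element. Since $\Omega ( N )$ is finite, $\Pi ( X )$ is a finite nonempty set closed under $\wedge$, so by a trivial induction it is closed under the meet of all its elements; thus $\mu ( X ) = \wedge_{ \pi \in \Pi ( X ) } \pi$ belongs to $\Pi ( X )$. Being a lower bound of every element of $\Pi ( X )$ and itself a member of $\Pi ( X )$, it is the unique minimum, that is, the finest pattern, which simultaneously establishes its existence, its uniqueness, and the formula~\eqref{eq:def:mu}. I would emphasize that all the analytic content sits in the intersection-closure argument of the previous paragraph; once Proposition~\ref{prop:im} is invoked within each block of $\pi_1$, the remaining assertions are formal consequences of finiteness and the partial order.
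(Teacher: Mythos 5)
Your proof is correct and has the same overall structure as the paper's: the coarsest element is handled trivially, join-closure follows from Corollary~\ref{prop:im:pfq}, meet-closure reduces to the per-block identity $\Pr ( X_{A_i} ) = \prod_j \Pr ( X_{A_i \cap B_j} )$ which is then substituted into the $\pi_1$-factorization, and the finest element is the grand meet. The one genuine difference is how you obtain that per-block identity: you instantiate Proposition~\ref{prop:im} with the ambient partition $\pi_2$ and the disjoint sets $b_j = A_i \cap B_j$ (each of which meets only the single block $B_j$, so the hypothesis is satisfied), whereas the paper re-derives the identity by explicit marginalization over the $X_{ B_j \setminus A_i }$ --- which is in substance a repetition of the argument already used to prove Proposition~\ref{prop:im} in Appendix~\ref{an:pr:prop:im}. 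Your packaging is therefore more economical and modular: the marginalization computation is done once and reused, at the small cost of verifying the proposition's hypothesis, which you do correctly. You are also slightly more careful than the paper on one point: membership of $\mu ( X ) = \wedge_{ \pi \in \Pi ( X ) } \pi$ in $\Pi ( X )$ needs finiteness of $\Pi ( X )$ together with induction on binary meets (a sublattice is only closed under pairwise operations), which you state explicitly while the paper leaves it implicit in the sentence ``since $\Pi ( X )$ is a sublattice, $\mu ( X ) \in \Pi ( X )$.''
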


In words, the partition $\mu ( X )$ corresponding to the finest pattern of mutual independence on $X$ is equal to the finest partition on $\Pi ( X )$. 
\par
For instance, considering the case $n = 4$, there are 15 potential patterns of mutual independence (see Figure~\ref{fig:Pi:ex}, top). If $X$ is such that $X_{123} \indep X_4$, $X_{124} \indep X_3$, $X_{12} \indep X_{34}$ as well as $X_{12}$, $X_3$, $X_4$ mutually independent, then $\Pi ( X ) = \{ 123 \mid 4, 124 \mid 3, 12 \mid 34, 12 \mid 3 \mid 4 \}$. $\mu ( X ) $ is obtained as the intersection of all elements of $\Pi ( X )$, which is $12 \mid 3 \mid 4$ (see again Figure~\ref{fig:Pi:ex}, top). This means that the finest pattern of mutual independence on $X$ is that $X_{12}$, $X_3$ and $X_4$ are mutually independent.

\section{Dichotomic independence} \label{s:dpi}

In this section, we first introduce the notion of dichotomic independence (Section~\ref{ss:dpi:def}) and relate it to the finest pattern of mutual independence (Section~\ref{ss:dpi:rel}). We then provide an inference procedure to extract patterns of dichotomic independence and estimate $\mu ( X )$ (Section~\ref{ss:dpi:inf}).

\subsection{Definition} \label{ss:dpi:def}

We start by formally introducing dichotomic independence.

\begin{definition}
 A pattern of independence is dichotomic if it holds between a subset of variables $X_a$  and the remaining variables $X_{ N \setminus a }$. 
\end{definition}

A pattern of dichotomic independence $X_a \indep X_{ N \setminus a }$ is therefore characterized by a formula of the form
\begin{equation} \label{eq:dicho}
 \Pr ( X ) = \Pr ( X_a ) \, \Pr ( X_{ N \setminus a } ).
\end{equation} 
It can therefore be associated with a 2-block partition $a \mid ( N \setminus a )$ of $N$, also called dichotomy, or bipartition. We denote by $\Omega_2 ( N )$ the set of all bipartitions on $N$.

\subsection{Relating mutual and dichotomic independence} \label{ss:dpi:rel}

A pattern of mutual independence is characterized by a specific set of patterns of dichotomic independence, as expressed in the following result.

\begin{proposition} \label{pr}
 Let $\mu ( X ) = a_1 \mid \dots \mid a_k$ be the finest pattern of mutual independence on $X$. Then the set of patterns of dichotomic independence on $X$ is given by
 \begin{equation} \label{eq:def:Delta}
  \Delta ( X ) = \Pi ( X ) \cap \Omega_2 ( N ) = \left\{ \pi = \cup_{ j \in b_1 } a_j \mid \cup_{ j \in b_2 } a_j, \quad b_1 \mid b_2 \in \Omega_2 ( [ k ] ) \right\}
 \end{equation}
\end{proposition}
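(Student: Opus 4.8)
The plan is to reduce everything to a statement about the partial order on $\Omega(N)$ and then read off the characterization of the relevant bipartitions. The first equality $\Delta(X) = \Pi(X) \cap \Omega_2(N)$ I would treat as essentially definitional: a pattern of dichotomic independence that holds on $X$ is precisely a valid pattern of mutual independence (hence a member of $\Pi(X)$) that happens to be a bipartition (hence a member of $\Omega_2(N)$), and conversely. So the real content lies entirely in the second equality.

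The key preliminary observation I would establish is that $\Pi(X)$ coincides with the principal filter generated by $\mu(X)$, that is, $\Pi(X) = \{\pi \in \Omega(N) : \mu(X) \leqslant \pi\}$. This follows by combining two earlier results. On one hand, Theorem~\ref{th:Pi} states that $\mu(X)$ is the finest element of $\Pi(X)$, so every $\pi \in \Pi(X)$ satisfies $\mu(X) \leqslant \pi$. On the other hand, Corollary~\ref{prop:im:pfq} guarantees that any partition coarser than a member of $\Pi(X)$ is again in $\Pi(X)$; applied to $\mu(X) \in \Pi(X)$, this shows that every $\pi$ with $\mu(X) \leqslant \pi$ belongs to $\Pi(X)$. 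Intersecting with $\Omega_2(N)$ then gives $\Delta(X) = \{\pi \in \Omega_2(N) : \mu(X) \leqslant \pi\}$, so it only remains to describe the bipartitions coarser than $\mu(X) = a_1 \mid \dots \mid a_k$.

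For this last step I would argue directly from the definition of ``$\leqslant$''. A bipartition $\pi = c_1 \mid c_2$ satisfies $\mu(X) \leqslant \pi$ exactly when each block of $\mu(X)$ is contained in a single block of $\pi$, i.e.\ when each $a_j$ lies entirely in $c_1$ or entirely in $c_2$. Setting $b_1 = \{ j : a_j \subseteq c_1 \}$ and $b_2 = \{ j : a_j \subseteq c_2 \}$ exhibits $b_1 \mid b_2 \in \Omega_2([k])$ with $c_1 = \cup_{j \in b_1} a_j$ and $c_2 = \cup_{j \in b_2} a_j$; conversely, every such union of blocks is coarser than $\mu(X)$ and forms a genuine bipartition because $b_1$ and $b_2$ are nonempty. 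This matches the right-hand side of the claimed identity. The individual steps are routine, so the only real care is structural: Theorem~\ref{th:Pi} is what forces $\mu(X) \leqslant \pi$ for the ``$\subseteq$'' inclusion, while Corollary~\ref{prop:im:pfq} is what makes each block-grouping an \emph{actual} valid pattern for the ``$\supseteq$'' inclusion, and I would record the degenerate case $k = 1$ (where both sides are empty) to confirm consistency.
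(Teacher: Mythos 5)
Your proof is correct and follows essentially the same route as the paper's: the first equality is treated as definitional, and the second is derived from Corollary~\ref{prop:im:pfq} together with the minimality of $\mu(X)$ over $\Pi(X)$ (Theorem~\ref{th:Pi}), which is exactly what the paper's very terse proof invokes. Your principal-filter formulation of $\Pi(X)$ and the check of the degenerate case $k=1$ merely make explicit details the paper leaves implicit.
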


\begin{proof}
 It is obvious that $\Delta ( X ) = \Pi ( X ) \cap \Omega_2 ( N )$. The second expression of $\Delta ( X )$ is a direct consequence of Proposition~\ref{prop:im:pfq} for a partition $\pi \in \Omega_2 ( N )$.
\end{proof}

In other words, the finest pattern of mutual independence $\mu ( X )$ between $k$ subvariables of $X$ entails all patterns of dichotomic independence that can be obtained by separating the $k$ blocks of $\mu ( X )$ in two. As a consequence the number of patterns of dichotomic independence entailed by a given $\mu ( X )$ composed of $k$ blocks is given by the cardinality of $\Omega_2 ( [ k ] )$, i.e., the number of partitions of $[ k ]$ into two blocks, which is Stirling number of the second kind ${ k \brace 2 } = 2 ^ { k - 1 } - 1$. 
\par
We are now in position to express $\mu ( X )$ from $\Delta ( X )$.

\begin{theorem} \label{th:Delta}
   Let $\mu ( X )$ be the finest pattern of mutual independence of $X$ and $\Delta ( X )$ the set of patterns of dichtomic independence on $X$. Then we have 
 \begin{equation}
   \mu ( X ) = \wedge_{ \delta \in \Delta ( X ) } \delta.
 \end{equation}
\end{theorem}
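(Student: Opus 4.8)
The plan is to compute the meet $\wedge_{\delta \in \Delta(X)} \delta$ directly at the level of blocks and match it against $\mu(X) = a_1 \mid \dots \mid a_k$. First I would recall the standard block-wise description of the meet in $\Omega(N)$: for any family of partitions, two indices $p, q \in N$ lie in the same block of the greatest lower bound exactly when they lie in the same block of every member of the family. It therefore suffices to prove, for all $p, q \in N$, that $p$ and $q$ share a block of $\mu(X)$ if and only if they share a block of every $\delta \in \Delta(X)$; this equivalence forces $\wedge_{\delta \in \Delta(X)} \delta$ and $\mu(X)$ to have identical blocks. The forward implication is the easy half and follows immediately from Proposition~\ref{pr}: every $\delta \in \Delta(X)$ is obtained by assembling whole blocks $a_1, \dots, a_k$ into two groups, so if $p$ and $q$ sit in a common block $a_m$ they are never split apart by any such grouping.

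The reverse implication is where the real content lies, and the key step is to produce, for any two distinct blocks of $\mu(X)$, a single dichotomy that separates them. Given $p \in a_m$ and $q \in a_{m'}$ with $m \neq m'$, I would take the bipartition $\{m\} \mid ([k] \setminus \{m\})$ of $[k]$, which (for $k \geq 2$) is a legitimate element of $\Omega_2([k])$. By Proposition~\ref{pr}, the corresponding pattern $\delta_m = a_m \mid \cup_{\ell \neq m} a_\ell$ belongs to $\Delta(X)$, and in $\delta_m$ the index $p$ lies in the first block while $q$ lies in the second. Hence $p$ and $q$ fail to share a block of some $\delta \in \Delta(X)$, which is the contrapositive of what is needed.

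Putting the two implications together identifies the blocks of $\wedge_{\delta \in \Delta(X)} \delta$ as exactly $a_1, \dots, a_k$, giving $\wedge_{\delta \in \Delta(X)} \delta = \mu(X)$. The single degenerate case is $k = 1$, for which $\mu(X)$ is the one-block partition and $\Delta(X) = \emptyset$; here the empty meet is taken to be the top element of $\Omega(N)$, which is again the one-block partition, so the formula survives. I expect the only genuine obstacle to be the reverse direction, and more precisely the observation that isolating a single block of $\mu(X)$ against the union of the others is always an admissible pattern of dichotomic independence --- this is precisely what Proposition~\ref{pr} supplies, so once that proposition is invoked the argument reduces to a counting-of-blocks verification rather than a fresh lattice computation.
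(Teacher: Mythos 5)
Your proof is correct, but it takes a genuinely different route from the paper for the crucial inequality. The paper opens the same way you would (the easy direction $\mu ( X ) \leqslant \wedge_{ \delta \in \Delta ( X ) } \delta$ follows from $\mu ( X )$ being finest), but it handles the converse by contradiction: writing $\rho = \wedge_{ \delta \in \Delta ( X ) } \delta = a_1 \mid \dots \mid a_k$ and assuming $\mu ( X ) < \rho$, it produces an intermediate partition $\omega$ with $\mu ( X ) \leqslant \omega < \rho$ that splits one block $a_i$ of $\rho$ into two pieces $a_{ i 1 }, a_{ i 2 }$, notes via Corollary~\ref{prop:im:pfq} that the bipartition $a_1 \dots a_{ i - 1 } a_{ i 1 } \mid a_{ i 2 } a_{ i + 1 } \dots a_k$ must then lie in $\Delta ( X )$, and concludes that $a_{ i 1 }$ and $a_{ i 2 }$ sit in different blocks of the meet $\rho$, contradicting that $a_i$ is a block of $\rho$. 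You instead compute the meet pointwise, using the equivalence-relation description (same block of the meet iff same block of every member), and for the hard direction you exhibit explicit separating dichotomies: for $p \in a_m$ and $q \in a_{ m' }$ with $m \neq m'$, the pattern $a_m \mid \cup_{ \ell \neq m } a_\ell$, certified to be in $\Delta ( X )$ by Proposition~\ref{pr}. Your version buys several things: it is constructive (no contradiction, no intermediate $\omega$); it shows the slightly stronger fact that the $k$ ``isolate-one-block'' dichotomies alone already have meet $\mu ( X )$; it directly establishes the block-membership characterization that the paper only states informally after the theorem (``two variables belong to the same block of $\mu ( X )$ if and only if they belong to the same block for all dichotomic partitions''); and it treats the degenerate case $k = 1$ (where $\Delta ( X ) = \emptyset$ and the empty meet is the top of $\Omega ( N )$) explicitly, which the paper leaves implicit. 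What the paper's argument buys in exchange is that it stays purely order-theoretic and never needs the pointwise description of the meet. Both proofs rest on the same essential ingredient: every bipartition coarser than $\mu ( X )$ belongs to $\Delta ( X )$ (Proposition~\ref{pr}, equivalently Corollary~\ref{prop:im:pfq} restricted to $\Omega_2 ( N )$).
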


This result is proved in Appendix~\ref{an:th:Delta}. In words, the finest pattern of mutual independence is given by the intersection of all patterns of dichotomic independence. Having access to the set of patterns of dichotomic independence is therefore enough to reconstruct $\mu ( X )$. In practice, $\mu ( X )$ is composed of blocks such that each block is included in a block of each dichotomic partition. In other words, two variables belong to the same block of $\mu ( X )$ if and only if they belong to the same block for all dichotomic partitions. 
\par
For instance, going back to the case $n = 4$ and Figure~\ref{fig:Pi:ex}, top, there are 7 potential patterns of dichotomic independence. We assume that, among these 7 patterns, only 3 hold for $X$: $X_{123} \indep X_4$, $X_{124} \indep X_3$, and $X_{12} \indep X_{34}$; besides, $X_{13} \nindep X_{24}$, $X_{14} \nindep X_{23}$, $X_{134} \nindep X_2$ and $X_1 \nindep X_{234}$. We then obtain $\Delta ( X ) = \{ 123 \mid 4, 124 \mid 3, 12 \mid 34 \}$ and $\mu ( X )$ can be retrieved as the intersection of all elements of $\Delta ( X )$. More precisely, since 1 and 2 belong to the same block for all three partitions of $\Delta ( X )$ ($\underline{12}3 \mid 4$, $\underline{12}4 \mid 3$, $\underline{12} \mid 34$), they will also belong to the same block in $\mu ( X )$. By contrast, 1 and 3 belong to two different blocks in $124 \mid 3$, so they will belong to two different blocks in $\mu ( X )$. Since in this partition, 3 also belongs to a different block than 2 and 4, it will form a block in itself in $\mu ( X )$. The same argument holds for 4 and partition $123 \mid 4$, leading to 4 as a 1-variable block in $\mu ( X )$. In the end, the intersection is given by $\mu ( X ) = 12 \mid 3 \mid 4$.

\subsection{Inference} \label{ss:dpi:inf}

In the previous section, we translated the problem of finding $\mu ( X )$ into the question of retrieving $\Delta ( X )$. We here provide a statistical test that estimates $\Delta ( X )$ from data when $X$ follows a multivariate normal distribution. To this aim, we rely on the minimum discrimination information statistic \gcite[Chap.~12, Section~3.6]{Kullback-1968} to simultaneously test for the existence of every potential pattern of dichotomic independence, with a correction for multiple comparisons by controlling the false discovery rate \gcite{Benjamini-1995}. The estimate $\hat{\Delta} ( X )$ of $\Delta ( X )$ is obtained as the set of all patterns of dichotomic independence that cannot be rejected. Finally, the estimate $\hat{\mu} ( X )$ for the finest pattern $\mu ( X )$ is obtained as
\begin{equation}
  \hat{\mu} ( X ) = \wedge_{ \delta \in \hat{\Delta} ( X ) } \delta.
\end{equation}
The test is described in more details below.

\subsubsection{Testing for the existence of one pattern of dichotomic independence} \label{sss:test}

We here describe the minimum discrimination information statistic \gcite[Chap.~12, Section~3.6]{Kullback-1968} used to test for the existence of each pattern of dichotomic independence.
\par
Assume that the data are composed of $k$ independent and identically distributed (i.i.d.) samples from a multivariate normal distribution with mean $\mu$ and covariance matrix $\Sigma$. Let $a \subset N$ and $\bar{a} = N \setminus a$; $n_a$ and $n_{ \bar{a} }$ be the cardinalities of $a$ and $\bar{a}$, respectively. The bipartition $a \mid \bar{a}$ generates a natural partition of $\Sigma$ into
\begin{equation}
 \Sigma = \begin{pmatrix}
  \Sigma_{ a a } & \Sigma_{ a \bar{a} }  \\
  \Sigma_{ \bar{a} a } & \Sigma_{ \bar{a} \bar{a} }
 \end{pmatrix},
\end{equation}
where $\Sigma_{ a a }$ is the $n_a$-by-$n_a$ covariance matrix of $X_a$,  $\Sigma_{ \bar{a} \bar{a} }$ the $n_{\bar{a}}$-by-$n_{\bar{a}}$ covariance matrix of $X_{\bar{a}}$, $\Sigma_{ a \bar{a} }$ the $n_a$-by-$n_{\bar{a}}$ covariance between $X_a$ and $X_{\bar{a}}$, and $\Sigma_{ \bar{a} a }$ its transpose. A pattern of dichotomic independence of the form $X_a \indep X_{ \bar{a} }$ can be tested by considering the null hypothesis that $\Sigma$ is block-diagonal 
\begin{equation}
 H_{ a \mid \bar{a} }: \qquad \Sigma = \begin{pmatrix}
  \Sigma_{ a a } & 0 \\
  0 & \Sigma_{ \bar{a} \bar{a} }
 \end{pmatrix},
\end{equation}
i.e., $\Sigma_{ a \bar{a} } = 0$ and $\Sigma_{ \bar{a}  a } = 0$. Let $R$ be the sample correlation matrix, partitioned as $\Sigma$,
\begin{equation}
 R = \begin{pmatrix}
  R_{ a a } & R_{ a \bar{a} } \\
  R_{ \bar{a} a } & R_{ \bar{a} \bar{a} }
 \end{pmatrix}.
\end{equation}
We then use the fact that, under $H_{ a \mid \bar{a} }$, the minimum discrimination information statistic against $H_{ a \mid \bar{a} }$
\begin{equation} \label{eq:mdis:def}
  2 \hat{I}_{ a \mid \bar{a} } = ( k - 1 ) \ln \frac{ \det ( R_{ a a } ) \, \det ( R_{ \bar{a} \bar{a} } ) } { \det ( R ) }
\end{equation}
has a distribution that is approximately noncentral chi-squared with
\begin{equation*}
 \frac{ n ( n + 1 ) } { 2 } - \frac{ n_a ( n_a + 1 ) } { 2 } - \frac{ n_{ \bar{a} } ( n_{ \bar{a} } + 1 ) } { 2 } = n_a n_{ \bar{a} }
\end{equation*}
 degrees of freedom and noncentrality parameter
\begin{equation*}
 \frac{ 1 }{ 12 ( k - 1 ) } \left[ ( 2 n ^ 3 + 3 n ^ 2 - n ) -( 2 n_a ^ 3 + 3 n_a ^ 2 - n_a )- ( 2 n_{ \bar{a} } ^ 3 + 3 n_{ \bar{a} } ^ 2 - n_{ \bar{a} } )\right].
\end{equation*}
Asymptotically, this is chi-squared distributed with the same number of degrees of freedom \gcite[Chap.~12, Section~3.6]{Kullback-1968}. This makes it possible to calculate a $p$-value corresponding to the null hypothesis $H_{ a \mid \bar{a} }$ that $X_a \indep X_{ \bar{a} }$. This test is consistent, as its power tends to 1 as the size of the dataset $k$ tends to infinity \gcite[Chap.~5, Section~5]{Kullback-1968}.
\par
Note that, for all subsequent analyses, the asymptotic chi-squared distribution will be used. We will come back to this point in the discussion.

\subsubsection{Multiple comparison}

To estimate $\Delta ( X )$, we need to simultaneously test 
\begin{equation}
 m = { n \brace 2 } = 2 ^ { n - 1 } - 1
\end{equation}
patterns of dichotomic independence using the method detailed above, each pattern $X_a \indep X_{ \bar{a} }$ being associated with its own null hypothesis $H_{ a \mid \bar{a} }$. This multiple comparison procedure can be conducted by controlling the false discovery rate (FDR). For a given significance level $\alpha$, the FDR-controlling approach finds the largest $m_{\mathrm{thres}}$ such that the $m_{\mathrm{thres}}$th smallest $p$-value is smaller than $\alpha m_{\mathrm{thres}} / m$ \gcite{Benjamini-1995}. These $m_{\mathrm{thres}}$ smallest $p$-values are declared significant and the corresponding null hypotheses are rejected. The remaining $m - m_{\mathrm{thres}}$ hypotheses are then assumed to hold, and $\hat{\Delta} ( X )$ is composed of the corresponding patterns.

\subsubsection{Summary of inference process} \label{ss:soip}

To summarize the inference process, each pattern of dichotomic independence $X_a \indep X_{\bar{a}}$ is associated with a null hypothesis $H_{ a \mid \bar{a} }$ and a minimum discrimination information statistic $\hat{I}_{ a \mid \bar{a} }$ as in Equation~(\ref{eq:mdis:def}). Since the distribution of $\hat{I}_{ a \mid \bar{a} }$ under $H_{ a \mid \bar{a} }$ is known asymptotically, a $p$-value can be computed. For a given significance level $\alpha$, the ${ n \brace 2 }$ patterns can be simultaneously tested while controlling the false discovery rate (FDR). The $m_{\mathrm{thres}}$ patterns with $p$-values smaller than $\alpha m_{\mathrm{thres}} / m$ are rejected, while the remaining $m - m_{\mathrm{thres}}$ are kept to form $\hat{\Delta} ( X )$. Finally, $\hat{\mu} ( X )$ is obtained as the intersection of all patterns in $\hat{\Delta} ( X )$.
\par
For instance, in the case $n = 4$ and related Figure~\ref{fig:Pi:ex}, top, if the four patterns $13 \mid 24$, $14 \mid 23$, $134 \mid 2$ and $1 \mid 234$ are rejected as significant (corresponding to $X_{13} \nindep X_{24}$, $X_{14} \nindep X_{23}$, $X_{134} \nindep X_2$ and $X_1 \nindep X_{234}$), then the remaining three patterns are deemed nonsignificant (i.e., $X_{123} \indep X_4$, $X_{124} \indep X_3$, $X_{12} \indep X_{34}$). We obtain $\hat{\Delta} ( X ) = \{ 123 \mid 4, 124 \mid 3, 12 \mid 34 \}$ and 
\begin{equation}
 \hat{\mu} ( X ) =  123 \mid 4 \wedge 124 \mid 3 \wedge 12 \mid 34 = 12 \mid 3 \mid 4.
\end{equation}
\par
Consistency of the whole procedure (simultaneous tests and correction for multiple comparisons using the FDR) is a consequence of the consistency of the individual tests together with the fact that the number of simultaneous tests is a function of the number of variables $n$ and is therefore fixed,  while the data size $k$ tends to infinity (see Appendix~\ref{an:consist} for a proof).

\subsubsection{Positive versus negative cases}

In the usual terminology of binary classification, each case for which the null hypothesis $H_{ a \mid \bar{a} }$ does not hold (corresponding to $X_a \nindep X_{\bar{a}}$) is coined ``positive'', while each case for which the null hypothesis $H_{ a \mid \bar{a} }$ does hold (corresponding to $X_a \indep X_{\bar{a}}$) is termed ``negative''.
\par
To validate the result of a classification procedure in the face of a ground truth, the notions of true/false positive/negative are also useful. A positive case in the ground truth is a true positive if it is correctly detected as a positive case by the classification procedure; otherwise, it is wrongly detected as a negative case by the classification procedure and is termed a false negative. Similarly, a negative case in the ground truth is a true negative if it is correctly detected as a negative case by the classification procedure; otherwise, it is wrongly detected as a positive case by the classification procedure and is termed a false positive.
\par
For instance, still in the case $n = 4$ and related Figure~\ref{fig:Pi:ex}, top, discussed in Section~\ref{ss:soip}, the positive cases are the four patterns $X_{13} \nindep X_{24}$, $X_{14} \nindep X_{23}$, $X_{134} \nindep X_2$, and $X_1 \nindep X_{234}$, and the negative cases $X_{123} \indep X_4$, $X_{124} \indep X_3$, and $X_{12} \indep X_{34}$. If the classification correctly determines that $X_{13} \nindep X_{24}$, then it is a true positive; otherwise, the classification procedure concludes that $X_{13} \indep X_{24}$ and it is a false negative. Symmetrically, if the classification correctly determines that $X_{123} \indep X_4$, then it is a true negative; otherwise, the classification procedure concludes that $X_{123} \nindep X_4$ and it is a false positive.

\section{Simulation study} \label{s:ss}

To assess the behavior of the method, we performed a simulation study with $n = 6$ variables, corresponding to $\varpi_6 = 203$ potential partitions and ${6 \brace 2 } = 2 ^ 5 - 1 = 31$ dichotomic partitions. In particular, we were interested in evaluating the performance of the method in terms of sensitivity and specificity. Sensitivity is defined as the ratio of positive cases in the ground truth model (corresponding here to existing patterns of the form $X_a \nindep X_{\bar{a}}$) that are actually detected as positive. As to specificity, it is the ratio of negative cases in the ground truth model (corresponding to existing patterns of the form $X_a \indep X_{\bar{a}}$) that are actually detected as negative.

\subsection{Data}

For $n = 6$, we considered partitions with an increasing number of blocks $K$ ($1 \leq K \leq 6$). For a given value of $K$, we performed 500 simulations, for a total of 3\,000 simulations. For each simulation, the 6 variables were randomly partitioned into $K$ clusters, all partitions having equal probability of occurrence (\gcitep[Chap.~12]{Nijenhuis-1978}; \gcitep{Wilf-1999}). For a given partition $a_1 \mid \dots \mid a_K$ of $[6]$, we generated 300 i.i.d. samples following either an univariate (if the size $n_k$ of $a_k$ was equal to 1) or a multivariate (if $n_k > 1$) normal distribution with mean $0$ and covariance matrix $\Sigma_k$ sampled according to a Wishart distribution with $n_k + 1$ degrees of freedom and scale matrix the identity matrix and then rescaled to a correlation matrix. Such a sampling scheme on $\Sigma_k$ generated correlation matrices with uniform marginal distributions for all correlation coefficients \gcite{Barnard_J-2000}.

\subsection{Analysis}

For each of the 3\,000 simulations, we considered subsets of size varying from 50 to 300 by increment of 50. For each dataset, we computed the $p$-values of the minimum discrimination information statistics corresponding to the 31 relationships of dichotomic independence under the null hypothesis that they are equal to 0 (Section \ref{ss:dpi:inf}) using the asymptotic chi-squared distribution. These $p$-values were then used to evaluate the inference quality using two approaches.
\par
First, we computed the area under curve (AUC) of  the receiver operating characteristic (ROC) curves corresponding to all datasets \gcite{Fawcett-2006}. AUC is a classical way to assess the performance of a binary classifier. More precisely, we applied significance levels of increasing values in $[0, 1]$. For each level, we computed the rates of false positives (i.e., the number of patterns $X_a \indep X_{\bar{a}}$ in the model that were wrongly detected as $X_a \nindep X_{\bar{a}}$ in the data divided by the total number of patterns $X_a \indep X_{\bar{a}}$ in the model) and the rate of true positives (i.e., the number of patterns $X_a \nindep X_{\bar{a}}$ in the model that were correctly detected as such in the data divided by the total number of patterns $X_a \nindep X_{\bar{a}}$ in the model). Plotting the true positive rate (sensitivity) as a function of the false positive rate (one minus specificity) yielded a ROC curve, whose area under the curve yielded the AUC. AUC ranges between 0 and 1, with perfect separation power  corresponding to 1, while the AUC corresponding to a random inference procedure is expected to be around 0.5. Since sensitivity could not be computed in the case of a 6-block model (all patterns of dichotomic independence hold, so there is no true positive in the model), and similarly for specificity in the case of a 1-block model (no pattern of dichotomic independence holds, so there is no true negative in the model), AUC was only obtained for data generated from models with 2, 3, 4, or 5 blocks.
\par
As a second method of assessment, we computed the sensitivity and specificity corresponding to a fixed significance level of $\alpha = 0.1$ with FDR-controlling procedure. We also computed the ratio of finest patterns of mutual independence that were correctly retrieved at that significance level.

\subsection{Results}

Results are summarized in Figure~\ref{fig:sim:res:1}. Globally, performance improved both in average and variability with increasing data size. AUC was found to often be close to 1, with variability increasing with the number of underlying blocks in the simulation model. This result shows that, given a correct significance level, the inference procedure could separate existing from non-existing patterns of dichotomic independence with very high accuracy and, therefore, infer the correct pattern of mutual independence.

\begin{figure}[!htbp]
 \centering
 \begin{tabular}{cc}
 (a) & (b) \\
  \includegraphics[width=0.45\columnwidth]{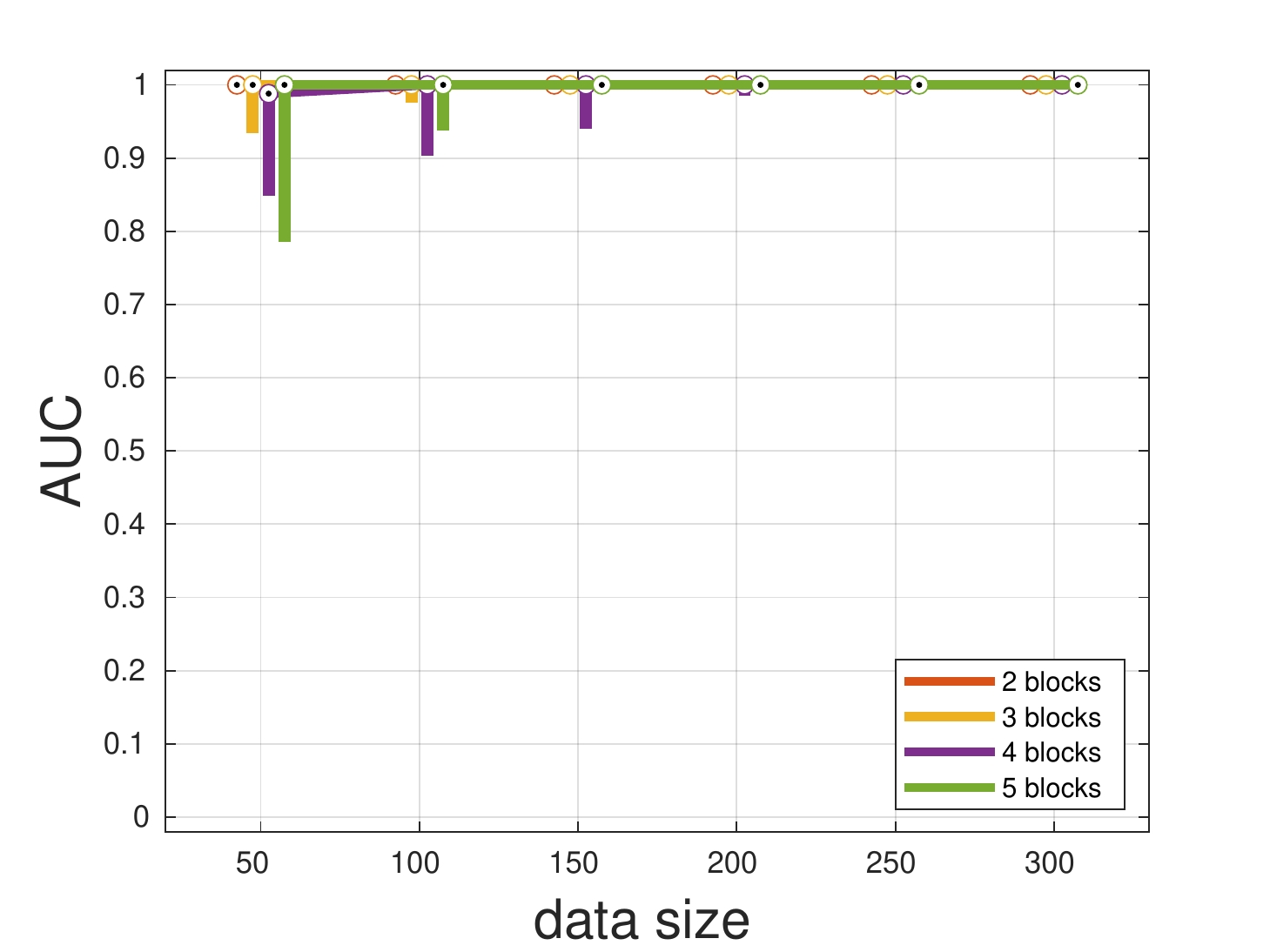} &
  \includegraphics[width=0.45\columnwidth]{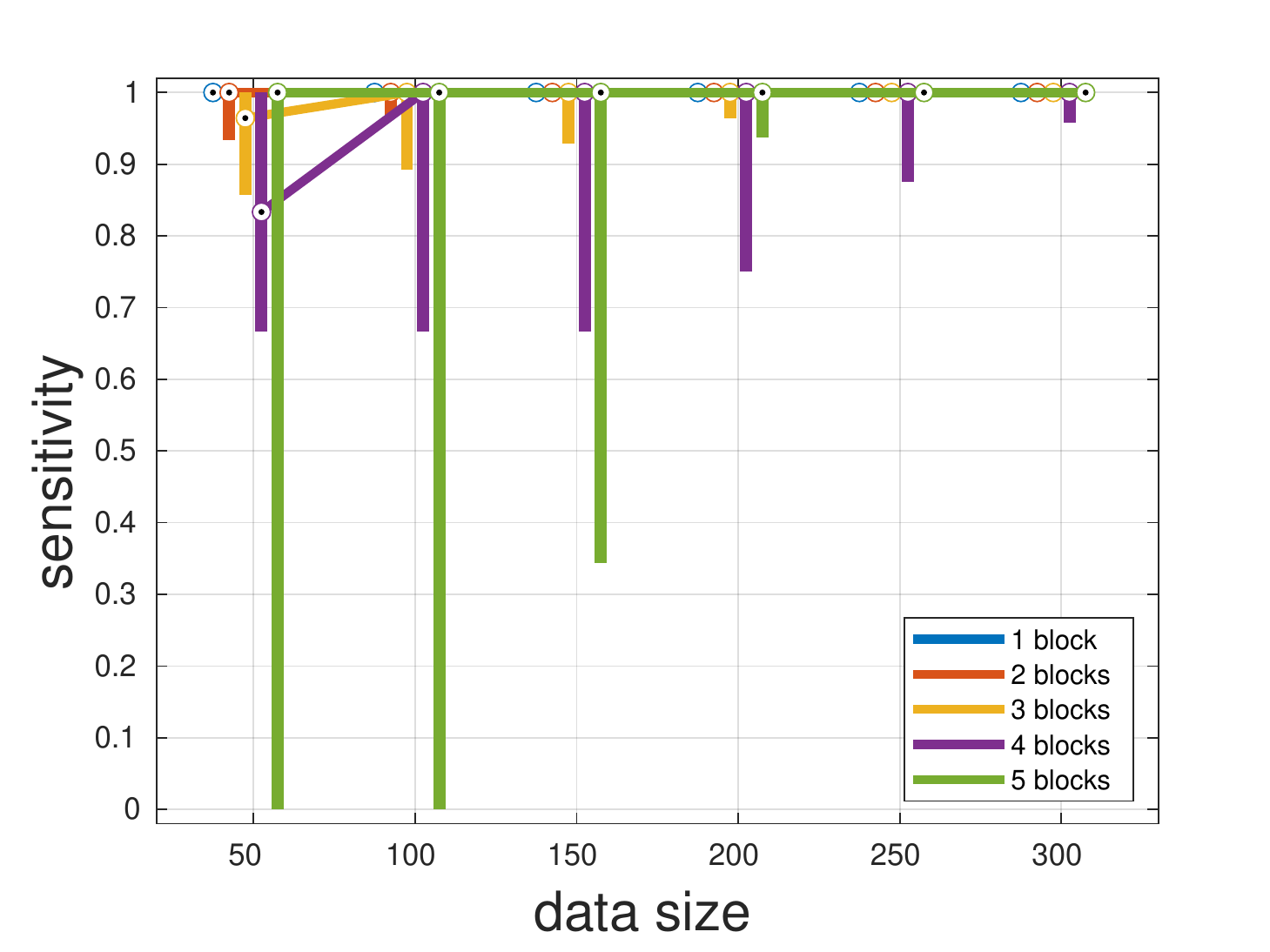}
  \\
  (c) & (d) \\
  \includegraphics[width=0.45\columnwidth]{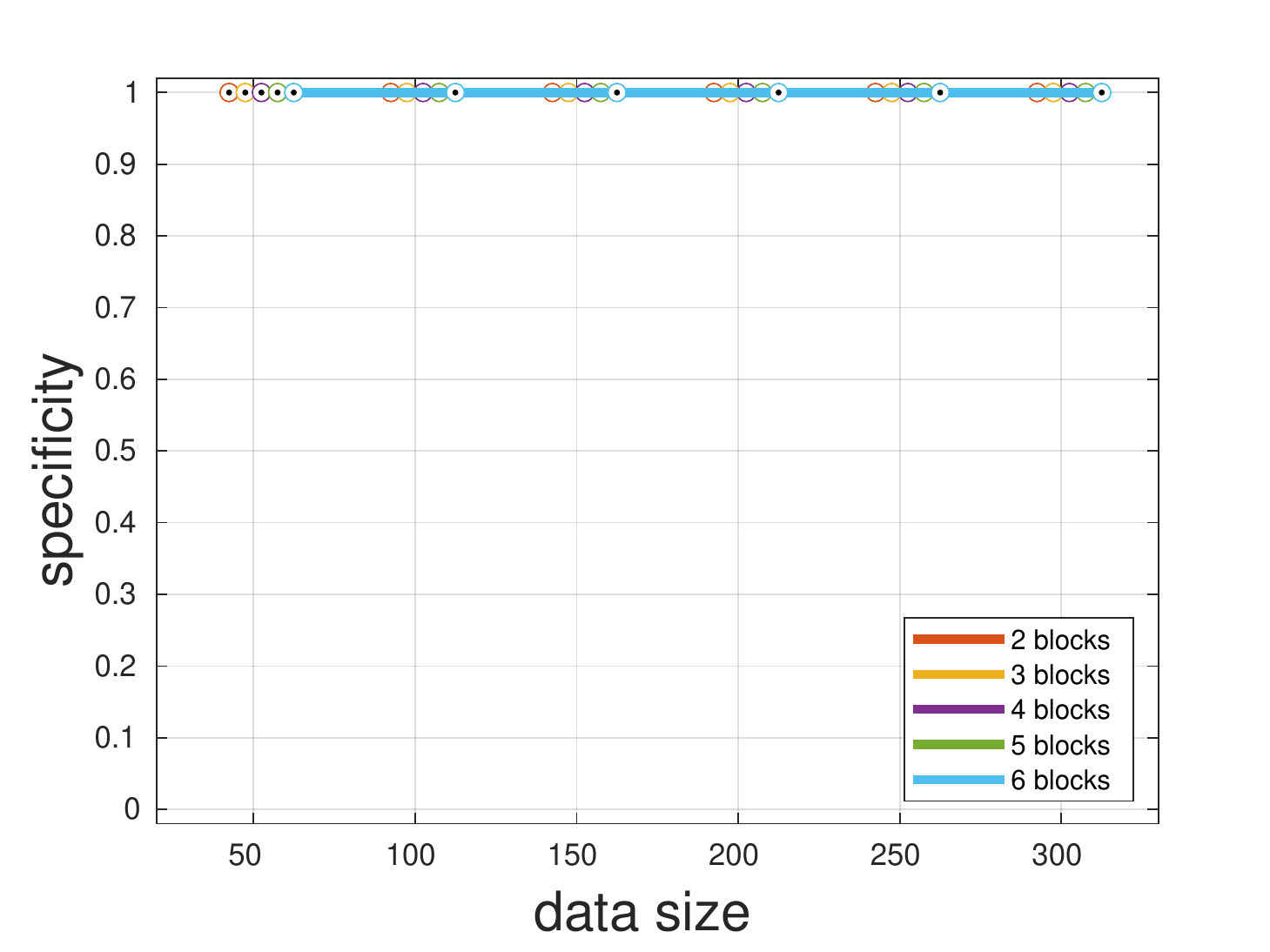} &
  \includegraphics[width=0.45\columnwidth]{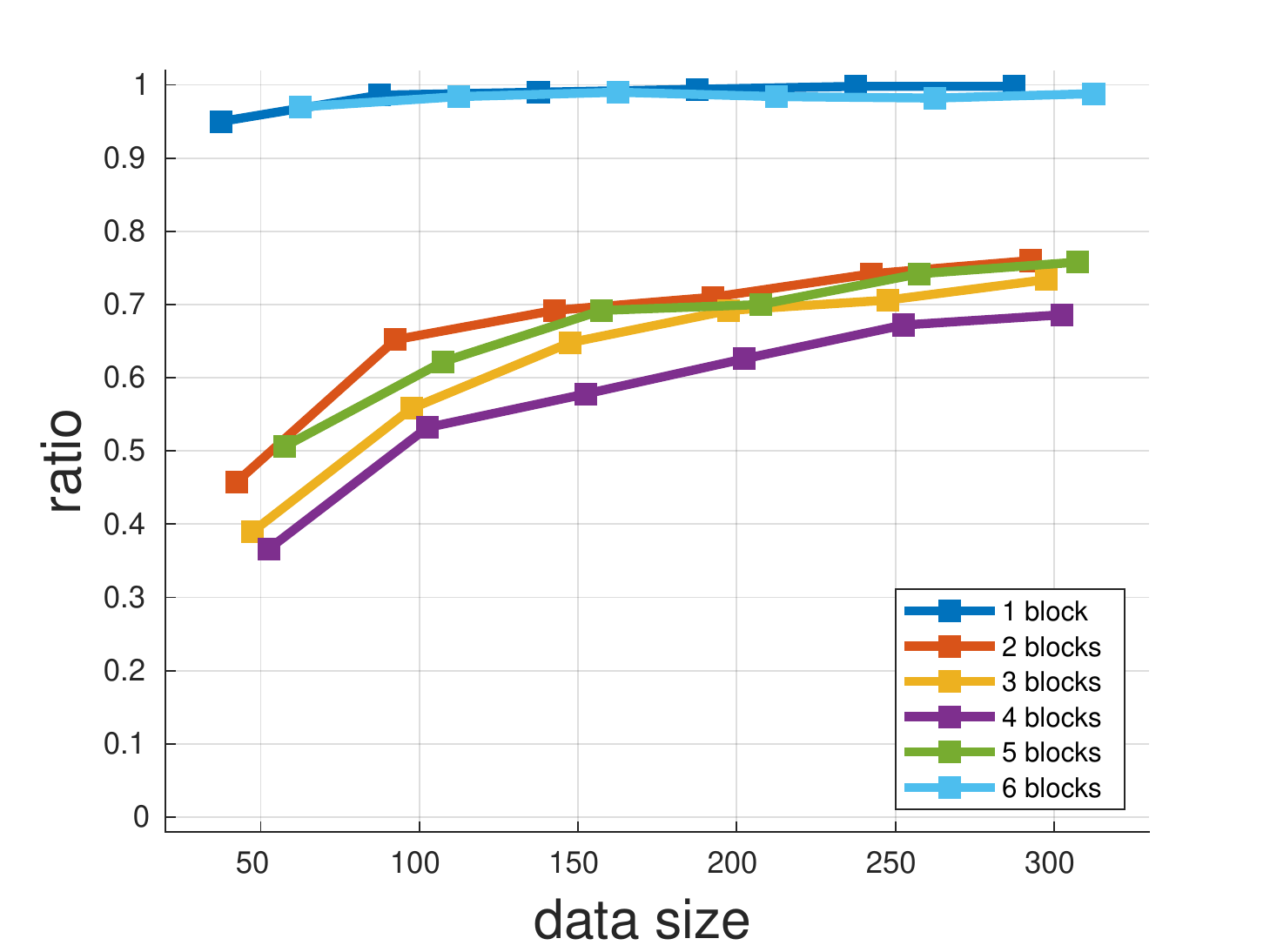}
 \end{tabular}
 \caption{\textbf{Simulation study.} (a) AUC. For a significance level $\alpha = 0.1$: (b) Sensitivity; (c) Specificity. (d) Ratio of patterns of mutual independence correctly detected. (a), (b) and (c) are boxplots (median and $[25\%,75\%]$ frequency interval).}
 \label{fig:sim:res:1}
\end{figure}

To further investigate AUC variability, we plotted AUC as a function of the absolute average correlation in blocks (see Figure~\ref{fig:sim:res:2}). As expected, the results show that the inference procedure was adversely affected by low within-block correlation levels and performed better when within-block correlation was larger on average. 

\begin{figure}[!htbp]
 \centering
 \begin{tabular}{cc}
  2 blocks & 3 blocks \\
  \includegraphics[width=0.45\columnwidth]{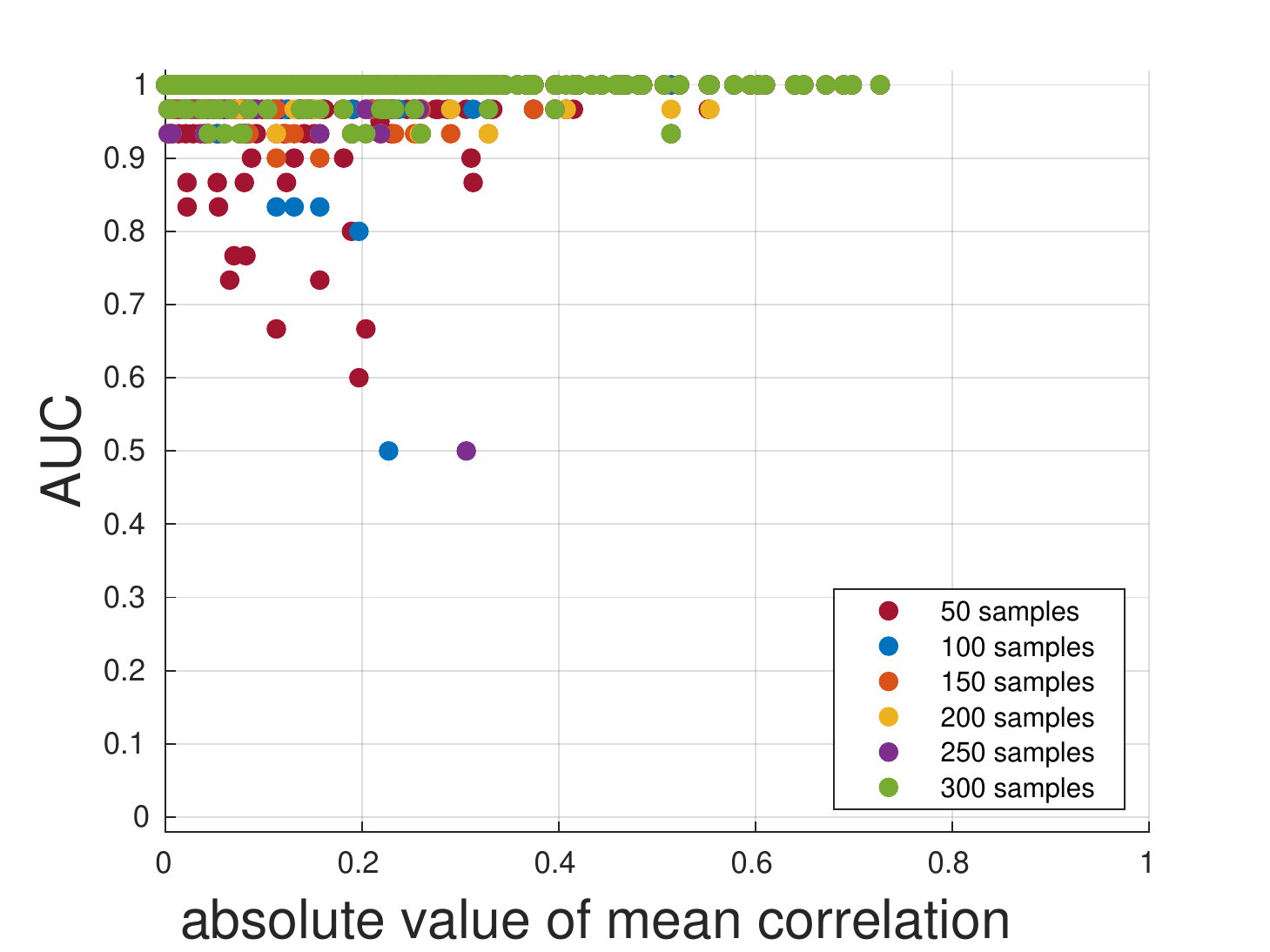} &
  \includegraphics[width=0.45\columnwidth]{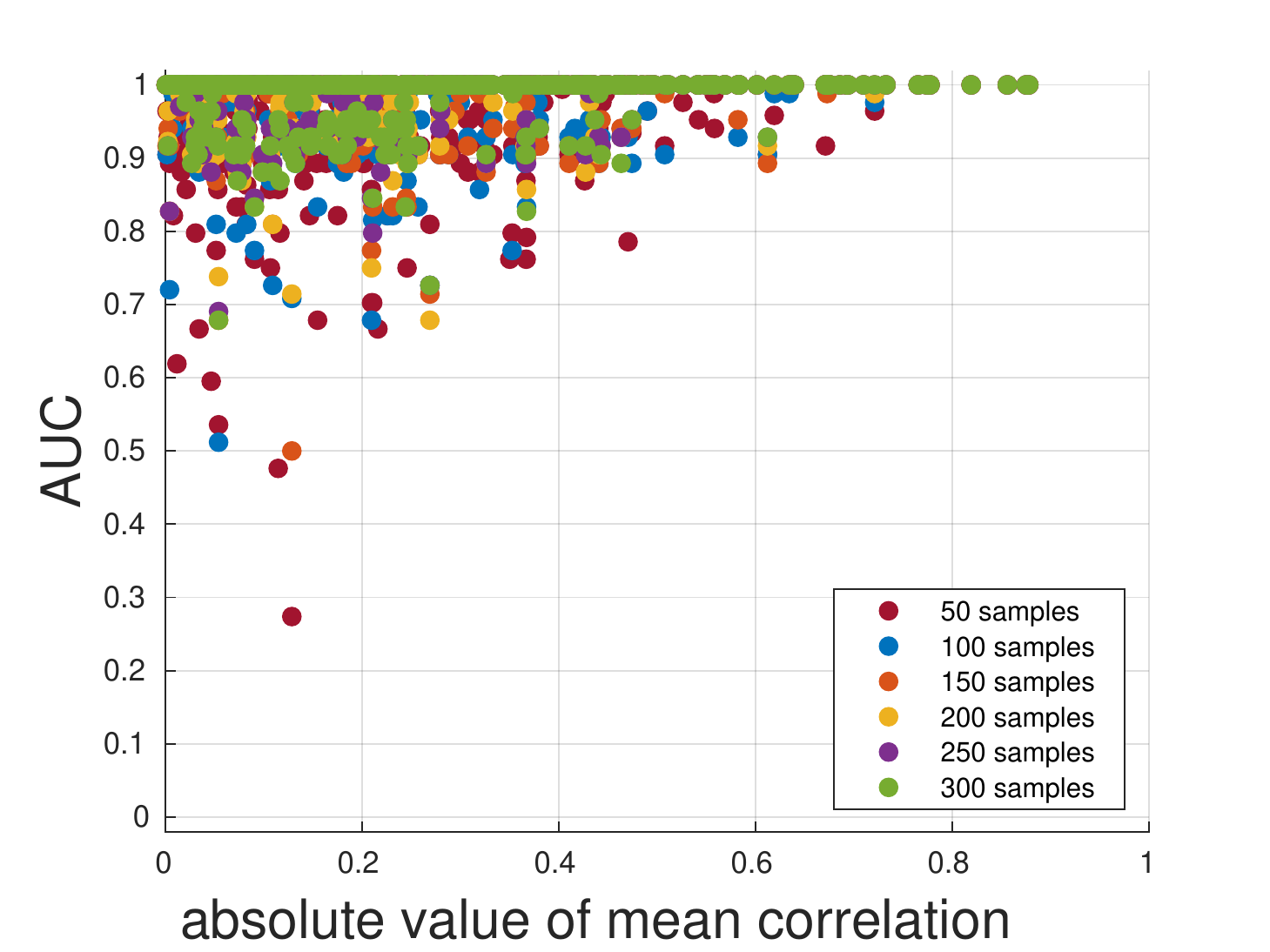}
  \\
  \\
  4 blocks & 5 blocks \\
  \includegraphics[width=0.45\columnwidth]{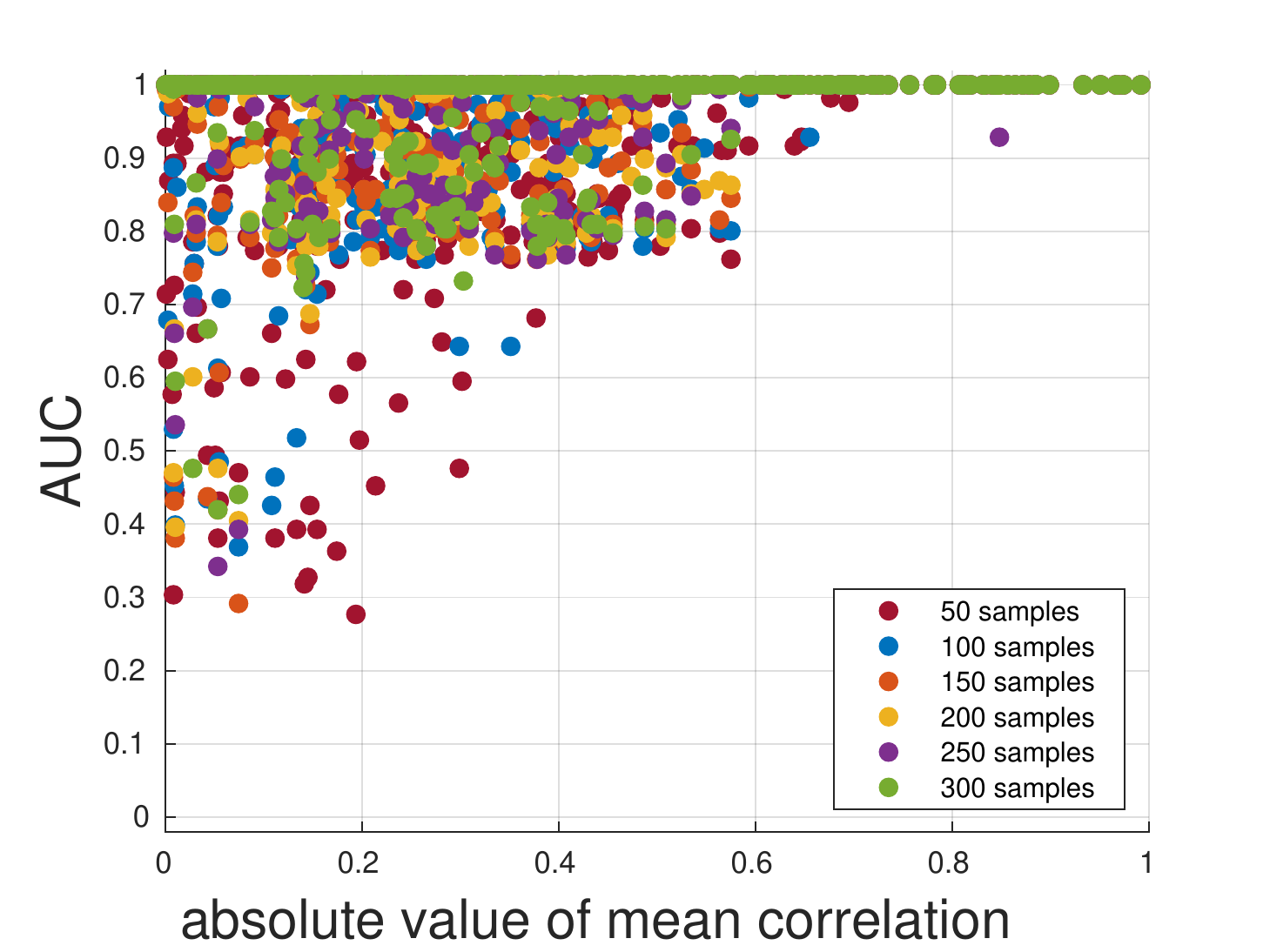} &
  \includegraphics[width=0.45\columnwidth]{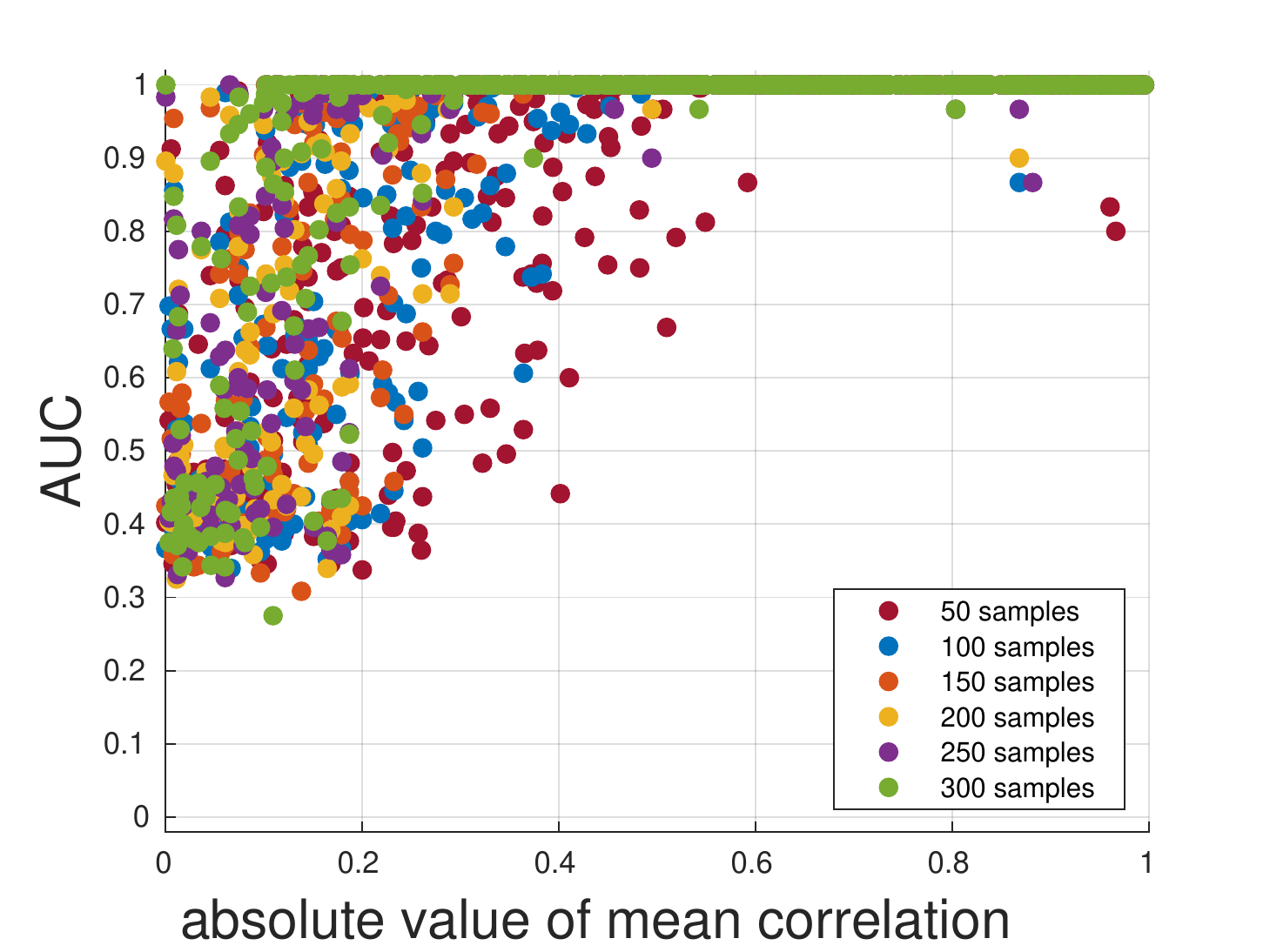}
 \end{tabular}
 \caption{\textbf{Simulation study.} Value of AUC as a function of the absolute value of the average correlation within blocks.}
 \label{fig:sim:res:2}
\end{figure}

With a fixed significance level of 0.1, specificity was very close to 1 for all data sizes and numbers of blocks, giving evidence in favor of an excellent detection of existing patterns of dichotomic independence. By contrast, sensitivity appeared to be poorer, with a level that tended to decrease with an increasing number of blocks and a variability that tended to increase with an increasing number of blocks. In other words, it was harder for the method to correctly extract patterns of the form $X_a \nindep X_{\bar{a}}$, and we tended to extract too many patterns of dichotomic independence.
\par
As a reference, we also analyzed the simulated data using a simple thresholding procedure with FDR. The results are summarized in Section~2 of \supplmat. In terms of ratio of correctly inferred patterns, results were similar for a 1-block partition (i.e., no pattern of mutual independence), and about 10\% worse for the other cases; for partitions with 2 to 5 partitions, such a difference was particularly observed for larger data sizes, while the result was mostly independent of data size for 6-block partitions (i..e, totally independent variables).

\section{Toy example} \label{s:hiv}
 
In this section, we considered a simple investigation of mutual independence patterns in real data \gcite{Roverato-1999, Marrelec-2006b, Marrelec-2015, Marrelec-2021b}. Akin to the simulation study, it involves 6 variables.

\subsection{Data}

The data originates from a study investigating early diagnosis of HIV infection in children from HIV positive mothers \gcite{Roverato-1999}. The variables are related to various measures on blood and its components: $X_1$ and $X_2$ immunoglobin G and A,
respectively; $X_4$ the platelet count; $X_3$ , $X_5$ lymphocyte B
and T4, respectively; and $X_6$ the T4/T8 lymphocyte ratio. The observed correlation matrix is given in Table~\ref{tab:rov:corr}. Experts expected the existence of a strong association between variables $X 1$ and $X_2$ as well as between variables $X_3$, $X_5$, and $X_6$. 
The data was analyzed using conditional independence graphs, suggesting no connections between $X_4$ and other variables \gcite{Roverato-1999, Marrelec-2006b}. This assumption was confirmed when investigating mutual independence patterns \gcite{Marrelec-2015, Marrelec-2021b}. Our question here is: Is $1 2 3 5 6 \mid 4$ the finest pattern of mutual independence?
 
\begin{table}[!htbp]
 \caption{\textbf{HIV study data.} Summary statistics for the HIV data. Sample variances (main diagonal, bold), correlations (lower triangle) and partial correlations (upper triangle, italic). Data from \gcitet{Roverato-1999}.}
\label{tab:rov:corr}
 \centering
 \begin{tabular}{c|cccccc}
        & $X_1$ & $X_2$ & $X_3$ & $X_4$ & $X_5$ & $X_6$ \\
  \hline
  $X_1$ & $\mathbf{8.84}$ & $\mathit{0.479}$ & $\mathit{-0.043}$ & $\mathit{-0.033}$ & $\mathit{0.356}$ & $\mathit{-0.236}$ \\
  $X_2$ & 0.483   & $\mathbf{0.192}$ & $\mathit{0.068}$ & $\mathit{-0.084}$ & $\mathit{-0.224}$ & $\mathit{-0.110}$ \\
  $X_3$ & 0.220 & 0.057 & $\mathbf{8.92 \times 10^6}$ & $\mathit{0.085}$ & $\mathit{0.552}$ & $\mathit{-0.330}$ \\
  $X_4$ & $-0.040$ & $-0.133$ & 0.149 & $\mathbf{2.03 \times 10^4}$ & $\mathit{0.091}$ & $\mathit{0.013}$ \\
  $X_5$ & 0.253    & $-0.124$ & 0.523 & 0.179 & $\mathbf{1.95 \times 10^6}$ & $\mathit{0.384}$ \\
  $X_6$ & $-0.276$ & $-0.314$ & $-0.183$ & 0.064 & 0.213 & $\mathbf{1.39}$
 \end{tabular}
\end{table}

\subsection{Analysis}

For $n = 6$ variables, there was a total of $m = 2^5 - 1 = 31$ patterns of dichotomic independence (to be compared with a total of 203 potential patterns of mutual independence). We computed the $p$-values associated to all 31 patterns using the asymptotic chi-squared distribution.

\subsection{Results}

All $p$-values were found to be lower than $10^{-4}$, except for the one associated with the partition $12356 \mid 4$, which was found to be equal to 0.332. Only the pattern $X_{12356} \indep X_4$ was not rejected for a wide range of significance levels (from $4 \times 10 ^ { - 5 }$ to 0.332), so that the result $\hat{\Delta} ( X ) = \{ 12356 \mid 4 \}$ was found to be quite robust to the choice of threshold. Since only one pattern of dichotomic independence was found to hold, $\hat{\mu} ( X )$ was equal to it, $\hat{\mu} ( X ) = 12356 \mid 4$. In other words, the inferred finest pattern of mutual independence is that $X_{12356} \indep X_4$.

\section{Real data} \label{s:rd}

We also considered an application of our method to real data consisting of brain recordings induced by an electrical stimulation of the median nerve at wrist level. It is well known that such a stimulation is associated with a typical response known as a somatosensory evoked potential (SEP). Our objective here was to investigate potential dependencies between various frequencies bands of the SEP.

\subsection{Data}

Somatosensory evoked potentials following median nerve stimulations were recorded in a healthy subject. Brain responses were acquired using multichannel EEG with a sampling frequency of  3~kHz. Electrical median nerve stimulation of 1~ms duration was applied to median nerve at the wrist level. The stimulus was applied 300 times, with a 500-ms inter-trial interval. Following previous recommendations, we studied the channels recorded from peri-central sulcus (CP3--Fz). Data acquisition was performed at the Center for Neuroimaging Research (CENIR) of the Brain and Spine Institute (ICM, Paris, France). The experimental protocol was approved by the CNRS Ethics Committee and by the national ethical authorities (CPP {\^I}le-de-France, Paris 6 -- Pitié-Salpêtrière and ANSM). To avoid artefacts induced by the stimulation, we focused on a time window ranging between 10 and 100~ms after stimulation.

\subsection{Analysis}

We considered the power spectral density (PSD) of the SEP as estimated by Welch's method \gcite{Welch-1967} at $n = 10$ frequency values uniformly spaced between 40 and 1000~Hz in log scale (see Table~\ref{tab:analyse:freq}). We defined $X_i$ as the log-10 of the PSD at frequency $f_i$. The data gave us access to $k = 300$ i.i.d. realizations of $X = X_{[10]}$. $X$ can be associated with $\varpi_n = 115\,975$ different patterns of mutual independence and ${ n \brace 2 } = 511$ patterns of dichotomic independence. The $p$-values were computed using the asymptotic chi-squared distribution.

\begin{table}[!htbp]
 \centering
 \caption{\textbf{Real data.} Value of frequencies (in Hz) used for power spectrum estimation.} \label{tab:analyse:freq}
 \begin{tabular}{cccccccccc}
  $f_1$ & $f_2$ & $f_3$ & $f_4$ & $f_5$ & $f_6$ & $f_7$ & $f_8$ & $f_9$ & $f_{10}$ \\
  \hline
  40 & 57.2 & 81.8 & 117 & 167 & 239 & 342 & 489 & 699 & 1000   
 \end{tabular}
\end{table}

\subsection{Results}

Results are summarized in Figure~\ref{fig:analyse:res} and Table~\ref{tab:analyse:res}. For significance levels ranging between about 0.05 and 0.2 with FDR correction, 7 patterns of dichotomic independence were found to be nonsignificant. $\hat{\Delta} ( X )$ was therefore composed of these 7 elements, whose intersection yielded the finest pattern of mutual independence
\begin{equation}
 \hat{\mu} ( X ) = 1 2 3 4 5 6 8 \mid 7 \mid 9 \mid 10.
\end{equation}
On the one hand, the $\log_{10}$-PSDs corresponding to lower frequencies ($X_1$ to $X_6$) were grouped together as dependent. These dependencies seemed to be at least partly driven by strong correlations between neighboring frequencies. The $\log_{10}$-PSDs corresponding to higher frequencies ($X_7$, $X_9$ and $X_{10}$) were found to be independent from all other frequencies. By contrast, $X_8$ was grouped with the lower frequencies. This dependency could mostly be seen in the form of stronger correlation values with $X_1$ and $X_2$. These results provide evidence for the fact that electrical stimulation of the median nerve has a global effect on  the signal in the 10--100~ms time window, both at low frequency and at higher frequencies. Furthermore, we expect different physiological processes to be at the origin of the observed responses at higher frequency: some that are related to lower frequency processes, some that are not.

\begin{figure}[!htbp]
 \centering
 \begin{tabular}{cc}
  \includegraphics[width=0.5\linewidth]{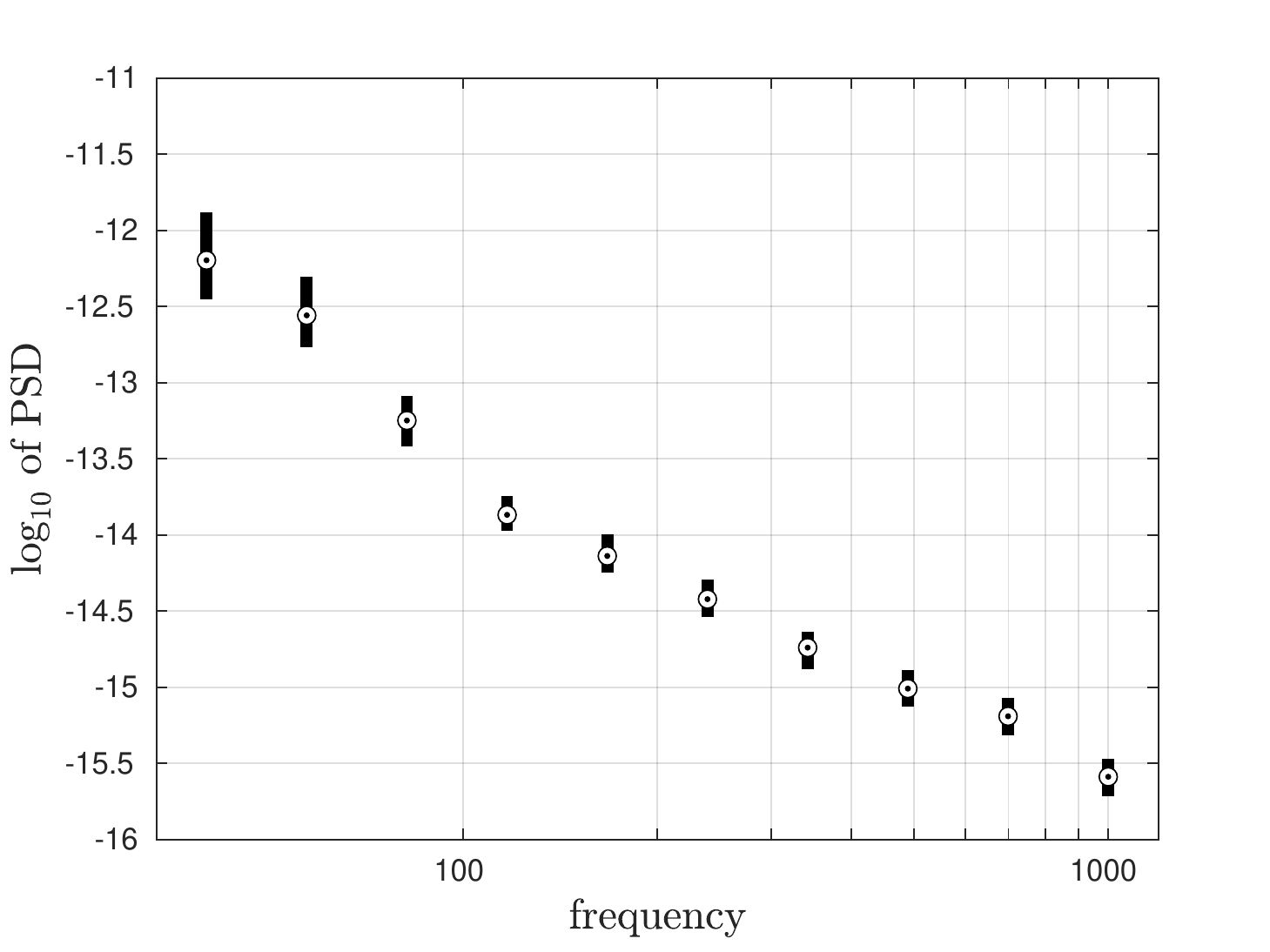}
  & \includegraphics[width=0.5\linewidth]{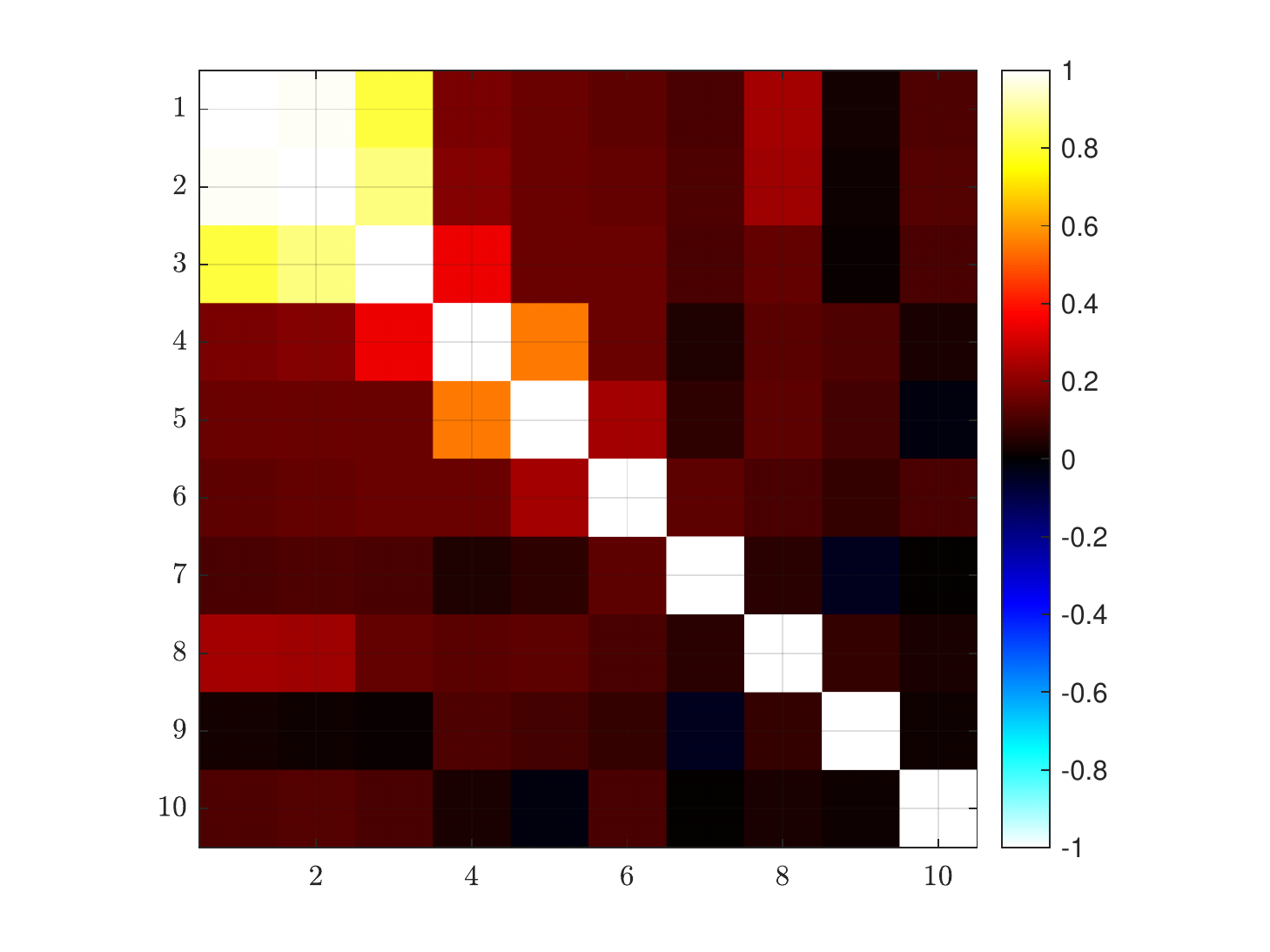} \\ 
  \includegraphics[width=0.5\linewidth]{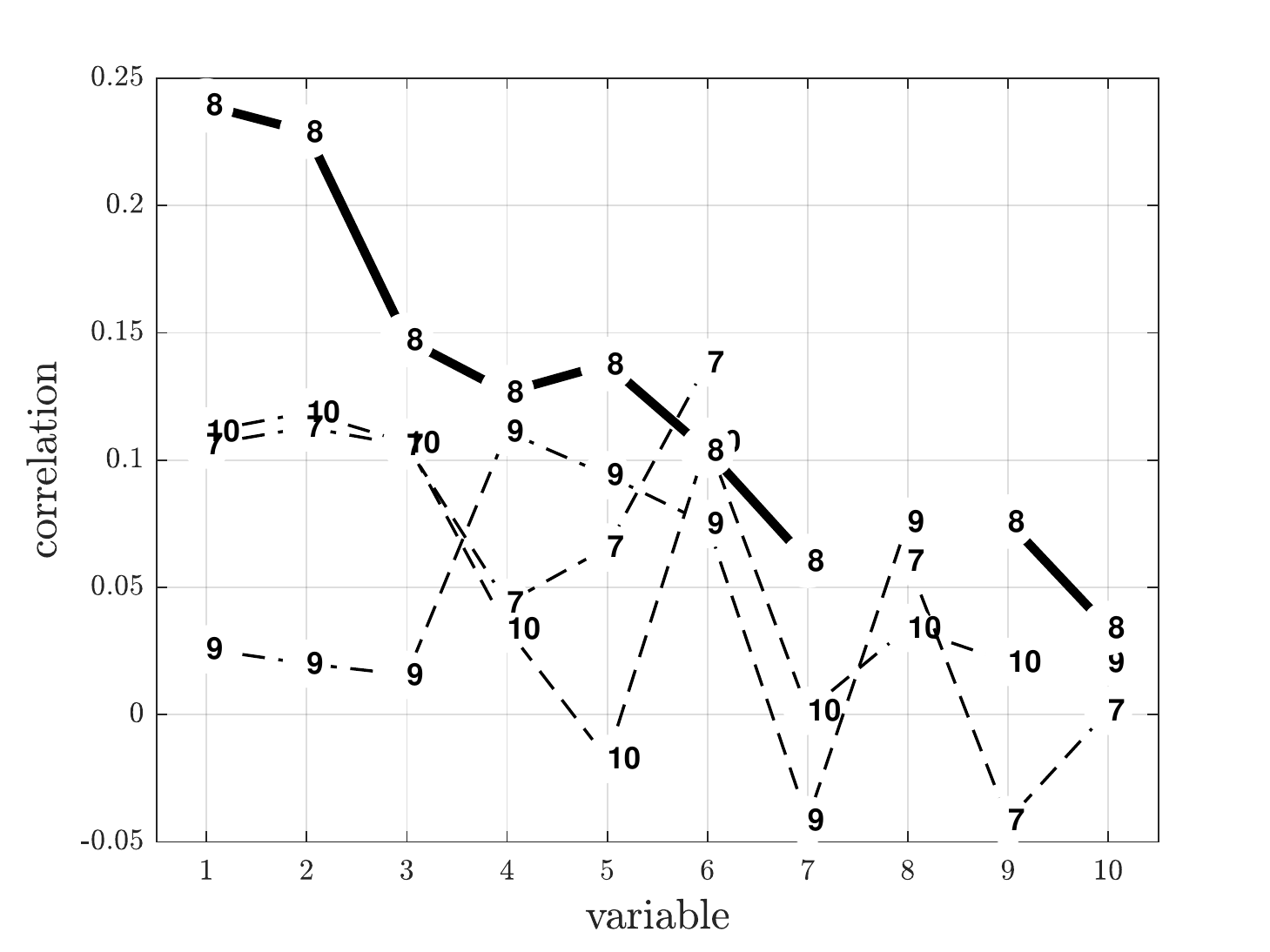}
  & 
 \end{tabular}
 \caption{\textbf{Real data.} Top left: Boxplot (median and $[25\%,75\%]$ frequency interval) of $X$ over the 300 stimulations. Top right: Sample correlation matrix of $X$. Bottom left: Sample correlation of $X_7$, \dots, $X_{10}$ with all other variables.} \label{fig:analyse:res}
\end{figure}

\begin{table}[!htbp]
 \centering
 \caption{\textbf{Real data.} 8 largest $p$-values and corresponding patterns of dichotomic independence.} \label{tab:analyse:res}
 \begin{tabular}{cc}
  Pattern of dichotomic independence & $p$-value \\
  \hline
  $X_{ 9 } \indep X_{ [10] \setminus \{ 9 \} }$ & 0.616 \\
  $X_{ \{ 7, 9 \} } \indep X_{ [10] \setminus \{ 7, 9 \} }$ & 0.469 \\
  $X_{ 7 } \indep X_{ [10] \setminus \{ 7 \} }$ & 0.366 \\
  $X_{ \{ 9, 10 \} } \indep X_{ [10] \setminus \{ 9, 10 \} }$ & 0.352 \\
  $X_{ 10 } \indep X_{ [9] }$ & 0.311 \\
  $X_{ \{ 7, 10 \} } \indep X_{ [10] \setminus \{ 7, 10 \} }$ & 0.215 \\
  $X_{ \{ 7, 9, 10 \} } \indep X_{ [10] \setminus \{ 7, 9, 10 \} }$ & 0.211 \\
  \hline
  $X_{ \{ 1, 2, 3, 4, 5, 8, 10 \} } \indep X_{ \{ 6, 7, 9 \} }$ & 0.0469
 \end{tabular}
\end{table}

\section{Discussion} \label{s:disc}

In the present manuscript, we were interested in the blind extraction of patterns of mutual independence from data and, more specifically, of one particular pattern: the \emph{finest} one. We used the connection between mutual independence and partitions together with the lattice structure of partitions. We showed that the set $\Pi ( X )$ of patterns of mutual independence that hold for a given multidimensional variable $X$ is a sublattice. This sublattice has a unique finest pattern $\mu ( X )$ of mutual independence on $X$ which is the intersection of all patterns of $\Pi ( X )$. We then introduced a specific kind of independence that we called dichotomic independence and showed that, if $\Delta ( X )$ is the set of all patterns of dichotomic independence holding for $X$, then $\mu ( X )$ is also the intersection of all patterns of $\Delta ( X )$. We finally proposed a method to estimate $\Delta ( X )$ from a dataset consisting of i.i.d. realizations of a multivariate normal distribution. The method was tested on simulated data and applied to a toy example and experimental data.
\par
Our approach strongly relies on the lattice structure of partitions, on the statistical properties of the minimum discrimination information statistic, as well as on the FDR procedure. From a mathematical perspective, the core of the method is Theorem~\ref{th:Delta}, which shows that the finest pattern of mutual independence $\mu ( X )$ can be exactly recoved as the intersection of all patterns of dichotomic independence, i.e., of all elements of $\Delta ( X )$. It is an advantage of the method that the theory relies on mathematical properties in abstract algebra, as this part is valid regardless of  the underlying data distribution and size.
\par
From a statistical perspective, the inference process allowed us to take advantage of the theoretical result and investigate the independence structure of data in the case of i.i.d. realizations of a multivariate normal variable. Since the procedure is consistent, the probability to have type~II errors (false negatives) tends to 0 as the data size tends to infinity. By contrast, type~I errors may not vanish, as the procedure tends to impose a fraction of false discoveries, i.e., false positive \gcite{Benjamini-1995, Storey-2004, Blanchard_G-2014}---unless one selects a threshold that itself tends to 0 as the data size tends to infinity \gcite{Neuvial-2012}. This is the price to pay for the fact that the FDR procedure is in general more powerful than procedures controlling the family-wise errors, as, e.g., Bonferroni procedure. When investigating mutual independence, researchers have usually already extracted a subset of variables that they think should be dependent. In this perspective, it is the potential existence of nontrivial patterns of independence that is of interest, as these are later subject to interpretation in terms of underlying mechanisms and allow to analyze independent groups of variables separately. It is therefore important to avoid overly conservative tests, as they would tend to falsely detect non-existing patterns of independence and, in subsequent analyses, separately investigate variables that are actually dependent.  Improving the FDR is a field of active research \gcite[see, e.g.,][]{Benjamini-2010, Genovese-2015}, and our method can easily be adapted to apply a wide range of multiple comparisons correction methods.
\par
A key advantage of the method is the reduction of dimensionality that it is able to perform. Indeed, one of the reasons of the complexity to extract patterns of mutual independence is that the space of potential patterns is discrete and very large; its cardinality is given by the $n$th Bell number $\varpi_n$ \gcite{Rota-1964}, which grows faster that an exponential but slower than a factorial---it is $O [ ( n / \ln n ) ^n ]$. By contrast, our approach proposes to test the subset of all dichotomic partitions. The potential number of patterns of dichotomic independence for an $n$-dimensional variable is given by the Stirling number of the second kind ${n \brace 2} = 2^{n-1} - 1$ as mentioned earlier. While this quantity grows quickly for large $n$, it is still substantially smaller than $\varpi_n$---see, e.g., Table~\ref{tab:bell-stirling} for a few comparative examples. For instance, for $n = 20$, listing and storing $\varpi_n = 5.17 \times 10^{13}$ patterns of mutual independence is out of reach of many computers, while the number of patterns of dichotomic independence ${ n \brace 2 } = 524\,287$ is large but still manageable. Note that, unlike greedy algorithms, such a reduction of the search space is not based on a heuristic, a local search, nor a stepwise procedure, as our approach provides an exact one-step, global solution thanks to Theorem~\ref{th:Delta}.

\begin{table}[!htbp]
 \centering
 \caption{\textbf{Number of potential patterns of mutual independence and dichotomic independence for an $n$-dimensional variable.}} \label{tab:bell-stirling}
 \begin{tabular}{c|ccccccccccc}
  $n$ & 1 & 2 & 3 & 4 & 5 & 6 & 7 & 8 & 9 & 10 & 20 \\
  \hline
  $\varpi_n$ & 1 & 2 & 5 & 15 & 52 & 203 & 877 & 4\,140 & 21\,147 & 115\,975 & $5.17 \times 10^{13}$ \\
   ${ n \brace 2 }$ & 0 & 1 & 3 & 7 & 15 & 31 & 63 & 127 & 255 & 511 & 524\,287
 \end{tabular}
\end{table}

While the theoretical result of Theorem~\ref{th:Delta} ensures that $\mu ( X )$ can always be obtained from $\Delta ( X )$, the efficiency of the methods relies in great part on how well the inference procedure is able to estimate $\Delta ( X )$ from data. On simulated data with $n = 6$, we showed that the method was able to perform well. With AUCs close to 1, there were many cases where an optimal threshold existed, separating negative and positive values almost perfectly, and leading to a correct retrieval of $\Delta ( X )$ and, therefore, $\mu ( X )$. When the significance threshold was set at $\alpha = 0.1$, we also found high specificity (showing that existing patterns of dichotomic independence could very often be correctly detected), but sensitivity appeared to behave more poorly. We believe that these results hint for a suboptimal choice of the significance level and suggest that there is still room for improvement in the choice of this value.
\par
We mentioned the theoretical result that the minimum discrimination statistic has a distribution that can be approximated by a noncentral chi-squared distribution and, asymptotically, a chi-squared distribution (Section~\ref{sss:test}). While we expected the noncentral distribution to provide better inference, in particular for smaller data sizes,  we actually found out that it exhibited poorer performance on the simulated data than the asymptotic chi-squared distribution (see Section~3 of \supplmat). The reason for this unexpected result remains a puzzle to us. As a consequence, we applied the asymptotic chi-squared distribution for all analyses.
\par
The main result of this work is the possibility to extract the finest pattern of mutual independence using dichotomic independence. This result was applied to a statistical framework assuming normal data. In this case, mutual independence could be tested using an asymptotically exact test based on the minimum discrimination information statistic. It is however interesting to consider what is specific to normal distributions and what is valid regardless of the distribution. Importantly, the mathematical framework leading to the main result (Sections 2, 3.1 and 3.2) is related to the lattice structure of the set of partitions and, as such, is valid regardless of the underlying distribution. Also, mutual information, from which the minimum discrimination information statistic is derived, is a measure that is always positive, and is equal to 0 if and only if the two variables are independent, regardless of the underlying distribution. By contrast, what is specific of normal variables is (i) the fact that the covariance matrix fully determines all patterns of independence, (ii) the expression of mutual independence as a function of the covariance matrix, Equation~\eqref{eq:mdis:def}, and (iii) the distribution of the test under the null hypothesis of independence between the two blocks of variables. To adapt the method to non-Gaussian variables, one would have to (1) either find an estimator for mutual information adapted to the situation or use another measure, and (2) determine the distribution of this measure under the null hypothesis. For (1), one could think of nonparametric estimators of mutual information \gcite[e.g.,][]{Kraskov-2005b}, with theoretical properties that still remain to be investigated (at least in part), or measures already proposed in order to investigate independence between two variables and referred to in the introduction (e.g., in the multivariate case, \gcitep{Jupp-1980}; \gcitep[Chaps. 2 and 8]{Cover_TM-1991}; \gcitep{Bakirov-2006}; \gcitep{Schott-2008}; \gcitep{Jiang_D-2012}; \gcitep{Szekely-2013b}). An advantage of our approach is precisely that it relies on independence between two sets of variables, a research field that has already been investigated.  Recent advances in that field, both in terms of potential measures that coud be used and associated statistical tests, could therefore be combined to our approach to deal with the non-Gaussian case.
\par
Still, the case of normal data that we studied has the interest of showing that blind extraction of the finest pattern of mutual independence remains a challenge even when the underlying statistical model is simple and correct and the test perfectly adapted. First, it has to be kept in mind that the number of tests grows quickly as the number of variables $n$ tends to infinity, potentially narrowing the validity domain of the asymptotic results regarding consistency and the approximation of the null hypothesis distribution. We suspect that what also makes this problem especially hard is that weak dependencies are difficult to detect and are often overlooked as independence. While this is clearly a limit of our statistical investigation, it mirrors a widely accepted principle in science, where phenomena are usually first considered independent until enough evidence for dependence is gathered.
\par
For application to real data, our method has the key advantages of being theoretically principled and its application a simple and fast one-step procedure. To our knowledge, it is the first time that it is possible to blindly and noniteratively extract the finest pattern of mutual independence from real data. For example, the analysis we presented on brain recordings dealt with 10 variables, corresponding to 115\,975 potential patterns of mutual independence. Working with dichotomic independence reduced the number of tests to 511 and provided a principled way to bring the results together. Also, unlike a black box, each test of dichotomic independence gives relevant information regarding the underlying structure of independence.
\par
More broadly, we advocate that understanding the structure of mutual independence is a key issue to deal properly with independence. The present work on dichotomic independence provides a first step in this direction.

\section{Acknowledgments}

The authors would like to thank the Center for Neuroimaging Research (CENIR) of the Brain and Spine Institute (ICM, Paris, France) for the acquisition of the real data, and Véronique Marchand-Pauvert for providing them with the data. They are also grateful to Étienne Roquain for insightful discussions regarding multiple comparisons correction and FDR.

\appendix

\section{Proof of Proposition~\ref{prop:im}} \label{an:pr:prop:im}

If $X_{ a_1 }$, \dots, $X_{ a_ k }$ are mutually independent, then $\Pr ( X )$ can be decomposed as
\begin{equation} \label{eq:def:im:gen}
 \Pr (X ) = \prod_{ i = 1 } ^ k \Pr ( X_{ a_i } ).
\end{equation}
For each $i = 1, \dots, k$, let $j_i$ be the (possibly empty) element of $[l]$ such that $a_i \cap b_{ j_i } \neq \emptyset$ and let $a_i' = a_i \cap b_{ j_i }$ as well as $a_i'' = a_i \setminus b_{ j_i }$. Then $a_i' \mid a_i''$ is a bipartition of $a_i$ and each $b_j$ can be expressed as the partition of a certain number of $a_i'$'s. The joint probability of $X$ can be expressed as
\begin{eqnarray*}
 \Pr ( X ) & = & \Pr ( X_{a_1}, \dots, X_{a_k} ) \\
 & = & \prod_{ i = 1 } ^ k \Pr ( X_{ a_i } ) \\
 & = & \prod_{ i = 1 } ^ k \Pr ( X_{ a_i' }, X_{ a_i'' } ).
\end{eqnarray*}
The joint distribution of $X_{b_1}, \dots, X_{b_l}$ can be obtained by marginalization of $\Pr (X )$ with respect to the $a_i''$'s:
\begin{eqnarray*}
 \Pr ( X_{ b_1 }, \dots, X_{ b_l } ) & = & \sum_{ X_{ \cup_{ i = 1 } ^ k a_i''} } \Pr ( X ) \\
 & = & \sum_{ X_{ \cup_{ i = 1 } ^ k a_i''} } \prod_{ i = 1 } ^ k \Pr ( X_{ a_i' }, X_{ a_i'' } ) \\
 & = & \prod_{ i = 1 } ^ k  \sum_{ X_{ a_i''} } \Pr ( X_{ a_i' }, X_{ a_i'' } ) \\
 & = & \prod_{ i = 1 } ^ k \Pr ( X_{ a_i' } ) \\
 & = & \prod_{ j = 1 } ^ l \Pr ( X_{ b_j } ).
\end{eqnarray*}
This is the definition of the fact that $X_{ b_1 }$, \dots, $X_{ b_l }$ are mutually independent.

\section{Proof of Theorem~\ref{th:Pi}} \label{an:th:Pi}

To prove that $\Pi ( X )$ is a sublattice of $\Omega ( N )$, we need to prove that it is stable by join and meet. Consider $\pi_1$ and $\pi_2$ in $\Pi ( X )$.
\par
Set first $\pi_3 = \pi_1 \vee \pi_2$. Since $\pi_1 \leqslant \pi_1 \vee \pi_2$, Proposition~\ref{prop:im:pfq} entails $\pi_1 \vee \pi_2 \in \Pi ( X )$.
\par
We now set $\pi_3 = \pi_1 \wedge \pi_2$. Assume that $\pi_1 = a_1 \mid \dots \mid a_k$, $\pi_2 = b_1 \mid \dots \mid b_l$ and $\pi_3 = c_1 \mid \dots \mid c_m$. Since $\pi_3 = \pi_1 \wedge \pi_2$, each $c_i$ is of the form $a_{ \phi ( i ) } \cap b_{ \psi ( i ) }$. Since $\pi_1 \in \Pi ( X )$, we have
\begin{equation} \label{eq:prv:im1}
 \Pr ( X ) = \prod_{ i = 1 } ^ k \Pr ( X_{ a_i } ).
\end{equation}
Since $\pi_2$ is a partition of $N$, it contains in particular a covering of each $a_i$. Setting $\xi ( i )$ the subset of $[l]$ for which $a_i \cap b_{ \xi ( i ) } \neq \emptyset$, we have
$$a_i = \cup_{ j \in \xi ( i ) } ( a_i \cap b_j ).$$
Since $\pi_2 \in \Pi ( X )$, we also have
\begin{equation} \label{eq:prv:im2}
 \Pr ( X ) = \prod_{ j = 1 } ^ l \Pr ( X_{ b_j } ).
\end{equation}
Marginalization over $X_{ b_{ j \not \in \xi ( i ) } }$ leads to
\begin{equation} \label{eq:prv:im2}
 \Pr ( X_{ b_{ j \in \xi ( i ) } }  ) = \prod_{ j \in \xi ( i ) } \Pr ( X_{ b_j } ).
\end{equation}
Setting $b_j' = b_j \setminus a_i$, we marginalize with respect to the $X_{ b_j' }$, yielding
\begin{equation} \label{eq:prv:im3}
 \Pr ( X_{ a_i }  ) = \prod_{ j \in \xi ( i ) } \Pr ( X_{ a_i \cap b_j } ).
\end{equation}
Incorporating these results into Equation~(\ref{eq:prv:im1}), we obtain
\begin{eqnarray*}
 \Pr ( X ) & = & \prod_{ i = 1 } ^ k \prod_{ j \in \xi ( i ) } \Pr ( X_{ a_i \cap b_j } ) \\
 & = & \prod_{ i = 1 } ^ l \Pr ( X_{ a_{ \phi ( i ) } \cap b_{ \psi ( i ) } } ) \\
 & = & \prod_{i = 1 } ^ m \Pr ( X_{ c_i } ).
\end{eqnarray*}
We there have $X_{ c_1 }, \dots, X_{ c_m }$ mutually independent, so that $\pi_3 \in \Pi ( X )$.
\par
Since $\Pi ( X )$ is stable by the join and meet, it is a sublattice of $\Omega ( N )$. The existence and unicity of $\mu ( X )$, defined through Equation~(\ref{eq:def:mu}), is assured by the fact that $\Omega ( N )$ is a lattice. Since $\Pi ( X )$ is a sublattice, $\mu ( X ) \in \Pi ( X )$. Since it is the meet of all elements in $\Pi ( X )$, it is finer than all elements in $\Pi ( X )$, and is therefore the bottom of $\Pi ( X )$. The fact that the trivial partition is the top of $\Pi ( X )$ is obvious.

\section{Proof of Theorem~\ref{th:Delta}} \label{an:th:Delta}

Since $\mu ( X )$ is the finest pattern of mutual independence on $X$ and any $\delta \in \Delta ( X )$ is a pattern of mutual independence on $X$, we have $\mu ( X ) \leqslant \delta$. Since this is true for any $\delta \in \Delta ( X )$, this entails that $\mu ( X ) \leqslant \wedge_{ \delta \in \Delta ( X ) } \delta$. 
\par
Express $\rho = \wedge_{ \delta \in \Delta ( X ) } \delta$ as $a_1 \mid \dots \mid a_k$. Assume now that $\mu ( X ) <  \wedge_{ \delta \in \Delta ( X ) } \delta$. Then there exists $\omega$ such that $\mu ( X ) \leqslant \omega  <  \rho$ and such that a block $a_i$ of $\rho$ is decomposed into two blocks $a_{i1}, a_{i2}$ in $\omega$. This entails that
$$\mu ( X ) \leqslant \omega \leqslant a_1 \dots a_{ i - 1 } a_{ i 1 } \mid a_{ i 2 } a_{ i + 1 } \dots a_k,$$
and, as a consequence, $a_1 \dots a_{ i - 1 } a_{ i 1 } \mid a_{ i 2 } a_{ i + 1 } \dots a_k$ belongs to $\Delta ( X )$. Since $\rho$ is the meet of all elements of $\Delta ( X )$, $a_{ i 1 }$ and $a_{ i 2 }$ must belong to two different blocks of $\rho$. This is in contradiction with the fact that $a_i$ is a block of $\rho$. As a consequence, we cannot have $\mu ( X ) <  \wedge_{ \delta \in \Delta ( X ) } \delta$, i.e., we must have $\mu ( X ) =  \wedge_{ \delta \in \Delta ( X ) } \delta$. 

\section{Proof of consistency} \label{an:consist}

Consistency of a test is defined as the fact that its power (i.e., one minus the probability for type~II errors) tends to 0 as the data size $k$ tends to infinity \gcite[Chap.~2, Section~3.9]{Fraser_DAS-1957}.
\par
We first notice that the FDR procedure has more power than the family-wise error correction using Bonferroni procedure (BP). This a direct consequence of the fact that hypotheses that are rejected by BP at threshold $\alpha$ have $p$-values lower than $\alpha/m$, where $m$ is the number of tests. As a consequence, these hypotheses will also be rejected when controlling the FDR, since the corresponding $p$-values, once ordered, will all be smaller than $\alpha i / m$ for any $1 \leq i \leq m$. Defining $\Delta' ( X )$ as the complement of $\Delta ( X )$ in $\Omega_2 ( N )$, i.e., the set of patterns of dichotomic independence that do \emph{not} hold for $X$, we can therefore express the power as
\begin{equation} \label{eq:FDR-BP}
 \Pr \left( \forall \delta \in \Delta' ( X ): \mbox{$H_{ \delta }$ FDR-rejected} \right) \geq \Pr \left( \forall \delta \in \Delta' ( X ): \mbox{$H_{ \delta }$ BP-rejected} \right).
\end{equation}
We then consider a modification of our procedure, where FDR has been replaced with BP. In that case, the probability to obtain a type II error is given by
\begin{equation} \label{eq:t2:def}
 \Pr \left( \exists \delta \in \Delta' ( X ): \mbox{$H_{ \delta }$ not BP-rejected} \right) = \Pr \left( \cup_{ \delta \in \Delta' ( X ) } \mbox{$H_{ \delta }$ not BP-rejected} \right).
\end{equation}
Use of Boole's inequality then yields
\begin{equation}
 \Pr \left( \cup_{ \delta \in \Delta' ( X ) } \mbox{$H_{ \delta }$ not BP-rejected} \right) \leq \sum_{ \delta \in \Delta' ( X ) } \Pr \left( \mbox{$H_{ \delta }$ not BP-rejected} \right).
\end{equation}
By definition of BP, we have
\begin{equation}
 \Pr \left( \mbox{$H_{ \delta }$ not BP-rejected} \right) = \Pr \left( \mbox{$p$-value} \left( H_{ \delta } \right ) > \frac{\alpha}{m} \right),
\end{equation}
where $m$ is the number of tests. Since each individual test based on the minimum information discrimination statistic is consistent \gcite[Chap.~5, Section~5]{Kullback-1968}, the probability in the previous equation tends to 0 for $\delta \in \Delta' ( X )$ as the data size $k$ tends to infinity. As a consequence,
\begin{equation}
 \exists k_{\delta}: \ \forall k > k_{ \delta } \ \Pr \left( \mbox{$p$-value} \left( H_{ \delta } \right) > \frac{\alpha}{m} \right) < \frac{ \epsilon } { \left\vert \Delta' ( X ) \right\vert }.
\end{equation}
Therefore,
\begin{equation}
 \forall k > \max_{ \delta \in \Delta' ( X ) } \left( k_{ \delta } \right) \ \sum_{ \delta \in \Delta' ( X ) } \Pr \left( \mbox{$H_{ \delta }$ not BP-rejected} \right) < \epsilon.
\end{equation}
Inserting this back into Equation~\eqref{eq:t2:def} yields
\begin{equation} 
 \Pr \left( \exists \delta \in \Delta' ( X ): \mbox{$H_{ \delta }$ not BP-rejected} \right) < \epsilon.
\end{equation}
This entails that the probability to have type II errors tends to 0 as $k \to \infty$ or, equivalently, that the power tends to 1 as $k \to \infty$. From Equation~\eqref{eq:FDR-BP}, we finally obtain that our procedure (with FDR) is consistent as well.

\bibliographystyle{elsarticle-harv}
\bibliography{nonabrev,anglais,mabiblio}

\end{document}